\title[Better neural network expressivity]{Better neural network expressivity: \\ subdividing the simplex}
\author[Bakaev]{Egor Bakaev}
\address{Department of Computer Science, University of Copenhagen}
\email{egor.bakaev@gmail.com}
\author[Brunck]{Florestan Brunck}
\address{Department of Computer Science, University of Copenhagen}
\email{flbr@di.ku.dk}
\author[Hertrich]{Christoph Hertrich}
\address{University of Technology Nuremberg}
\email{christoph.hertrich@utn.de}
\author[Stade]{Jack Stade}
\address{Department of Computer Science, University of Copenhagen}
\email{jast@di.ku.dk}
\author[Yehudayoff]{Amir Yehudayoff}
\address{Department of Computer Science, University of Copenhagen, and Department of Mathematics, Technion--IIT}
\email{amir.yehudayoff@gmail.com}
\newcommand{\ip}[2]{\langle #1,#2 \rangle}
\newcommand{\conv}{\mathsf{conv}}
\newcommand{\R}{\mathbb{R}}
\DeclareMathOperator{\relu}{\mathsf{ReLU}}
\DeclareMathOperator{\ICNN}{\mathsf{ICNN}}
\DeclareMathOperator{\CPWL}{\mathsf{CPWL}}
\newcommand{\MAX}{\mathsf{MAX}}
\DeclareMathOperator{\cP}{\mathcal{P}}
\newtheorem{theorem}{Theorem}
\newtheorem{lemma}[theorem]{Lemma}
\newtheorem{claim}[theorem]{Claim}
\newtheorem{prop}[theorem]{Proposition}
\newtheorem{corollary*}{Corollary}
\newtheorem{remark}{Remark}
\begin{document}

\renewcommand{\max}{\mathsf{max}}

\begin{abstract}
This work studies the expressivity of $\relu$ neural networks with a focus on their depth. A sequence of previous works showed that \mbox{$\lceil \log_2(n+1) \rceil$} hidden layers are sufficient to compute
all continuous piecewise linear ($\CPWL$) functions on~$\R^n$. Hertrich, Basu, Di Summa, and Skutella (NeurIPS\,'21 / SIDMA\,'23) conjectured that this result is optimal in the sense that there are $\CPWL$ functions on $\R^n$,
like the maximum function, that require this depth. We disprove the conjecture and show that $\lceil\log_3(n-1)\rceil+1$ hidden layers are sufficient to compute all $\CPWL$ functions on $\R^n$.

A key step in the proof is that $\relu$ neural networks with two hidden layers can exactly represent the maximum function of five inputs. More generally, we show that $\lceil\log_3(n-2)\rceil+1$ hidden layers are sufficient to compute the maximum of $n\geq 4$ numbers.
Our constructions almost match the $\lceil\log_3(n)\rceil$ lower bound of Averkov, Hojny, and Merkert (ICLR\,'25) in the special case of $\relu$ networks with weights that are decimal fractions.
The constructions have a geometric interpretation via polyhedral subdivisions of the simplex into ``easier'' polytopes.
\end{abstract}

\maketitle

\section{Introduction}

Neural networks with rectified linear unit ($\relu$) activations are among the most common models in modern machine learning. A \emph{$\relu$ network} with \emph{depth} $k+1$ is defined via $k+1$ affine transformations $T^{(i)}:\R^{n_{i-1}}\to \R^{n_i}$ for $i=1,\dots,k+1$. It \emph{computes} the function defined as alternating composition
\[
T^{(k+1)}\circ \relu \circ T^{(k)} \circ \cdots \circ  \relu \circ T^{(2)} \circ \relu \circ{} T^{(1)}
\]
of the affine transformations with component-wise applications of the $\relu$ function $\relu(x) = \max \{0,x\}$. We say that the network has $k$ \emph{hidden layers}. Alternatively, $\relu$ networks can be defined as directed acyclic graphs of neurons, each of which computes a function $x\mapsto\relu \big(b + \sum_i a_i x_i \big)$.

Functions represented by $\relu$ networks are \emph{continuous and piecewise linear} ($\CPWL$), and every such function on $\R^n$ can be computed by a $\relu$ network~\cite{arora2018understanding}.
This naturally leads to complexity-theoretic questions, such as: what are the minimal size (i.e., number of neurons) and depth needed to represent specific $\CPWL$ functions? Such questions have been extensively studied recently (see~\cite{arora2018understanding,williams2018limits,hertrich2021towards,haase2023lower,valerdi2024minimal,huchette2023deep,hertrich2024neural} and references within), but many major questions are still open, such as the minimum number of layers required to represent all $\CPWL$ functions, which is the topic of this paper.

Denote by $\CPWL_n$ the space of $\CPWL$ functions $f : \R^n \to \R$ and by $\relu_{n,k}$ the subset of $\CPWL_n$ representable with $k$ hidden layers.
An important $\CPWL$ function for understanding neural network depth is the $\MAX_n$ function defined by
\[\MAX_n(x) = \MAX_n(x_1,\ldots,x_n) = \max \{x_1,\ldots,x_n\}.\]
Wang and Sun~\cite{wang2005generalization} showed that every function in $\CPWL_n$
can be written as a linear combination
of $\MAX_{n+1}$ functions applied to some  affine functions:
for every $f \in \CPWL_n$, there are affine maps
$A_1,\ldots,A_s$ from $\R^n$ to $\R^{n+1}$ 
and $\sigma_1,\ldots,\sigma_s \in \{\pm 1\}$ so that
\begin{equation}\label{eq:wangsun}f(x) = \sum_{i = 1}^s \sigma_i \MAX_{n+1}(A_i(x)).\end{equation}
As observed by Arora, Basu, Mianjy, and Mukherjee\cite{arora2018understanding}, this fact implies that 
\[\CPWL_n \subseteq \relu_{n,\lceil \log_2 (n+1) \rceil}.\]
In words, every $\CPWL$ function on $\R^n$ can be represented with $\lceil \log_2 (n+1) \rceil$ hidden layers. The reason is that pairwise maxima can be represented with one hidden layer via $\MAX_2(x_1,x_2) = \relu(x_2-x_1) + x_1$. Then, $\MAX_{n+1}$---and therefore also the representation \eqref{eq:wangsun}---can be realized with $\lceil \log_2 (n+1) \rceil$ hidden layers by implementing it in a binary tree manner.

From this discussion, it follows that the depth complexity of $\CPWL_n$ is essentially equal to the depth complexity of $\MAX_{n+1}$.\footnote{$\MAX_{n+1}$ is defined on $\R^{n+1}$ rather than $\R^n$, but this can be fixed by setting one coordinate to zero and considering $\max\{0,x_1,\dots,x_n\}$ instead, which has the same depth complexity as $\MAX_{n+1}$, see \cite{hertrich2021towards}. We nevertheless stick to our definition for symmetry reasons.}
This highlights the importance of $\MAX_n$, and motivates the study of its depth complexity.

Summarizing, every $\CPWL$ function on $\R^n$ can be realized with at most $\lceil \log_2 (n+1) \rceil$ hidden layers, leading to the natural question of whether this upper bound is sharp. In fact, to the best of our knowledge, the best lower bound is $2$, meaning that there is no function known that provably requires more than two hidden layers.

Hertrich, Basu, Di Summa, and Skutella~\cite{hertrich2021towards} conjectured that the $\lceil \log_2 (n) \rceil$ upper bound
on the required number of hidden layers for $\MAX_n$ is sharp. 
This conjecture led to a sequence of works that proved it in a variety of special cases. 
Mukherjee and Basu~\cite{mukherjee2017lower} are the first to prove the conjecture for the case $n=3$, i.e., showing that two hidden layers are necessary for $\MAX_3$.
Haase, Hertrich, and Loho~\cite{haase2023lower} proved the conjecture for general $n$ when the $\relu$ network is allowed to use only integer weights.
The techniques from~\cite{haase2023lower} were extended by Averkov, Hojny, and Merkert~\cite{averkov2025expressiveness}, who proved logarithmic lower bounds for the case of fractional weights whose denominators do not have some fixed prime factor.
The results of Valerdi~\cite{valerdi2024minimal} imply that the conjecture holds for the input convex neural networks ($\ICNN$s) model, which was defined by Amos, Xu and Kolter~\cite{amos2017input} and studied in subsequent works (e.g.~\cite{bunning2021input,chen2018optimal,chen2020data,bunne2023learning,hertrich2024neural}). Stronger lower bounds for $\ICNN$s and monotone $\relu$ networks were shown by Bakaev, Brunck, Hertrich, Reichman, and Yehudayoff~\cite{bakaev2025depth}.
Grillo, Hertrich, and Loho proved an $\Omega( \log \log n)$ lower bound for $\relu$ networks that ``respect the singularities of $\MAX_n$''
(see \cite{grillo2025depth} for a formal definition).
Their proof also implies the conjecture for $n=5$ for networks that ``respect the singularities of $\MAX_n$'', a statement that was already shown in a computer-aided manner in \cite{hertrich2021towards}.

\subsection{Our Contribution.}
Our main result is that the conjecture is false. More precisely, we prove that the maximum of $n$ numbers can be computed with $\lceil\log_3(n-2)\rceil+1$ hidden layers.

\begin{theorem}\label{thm:maxtothe3}
    For $n\geq 1$, we have $\MAX_{3^n+2}\in\relu_{n+1}$.
\end{theorem}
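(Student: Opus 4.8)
The plan is to prove the theorem by induction on $n$, where the base case $n=1$ is the statement $\MAX_5 \in \relu_2$, i.e., that the maximum of five numbers can be computed with two hidden layers. The inductive step will reduce $\MAX_{3^n+2}$ to $\MAX_{3^{n-1}+2}$ by computing, in the first hidden layer, a small number of auxiliary quantities that let us express $\max$ of $3^n+2$ inputs as a $\max$ of $3^{n-1}+2$ already-available linear functions; each of those can then be computed with $n-1$ further hidden layers by induction, for a total of $n+1$. The ``$+2$'' slack in the count is exactly what makes the recursion close: a two-hidden-layer $\MAX_5$ gadget must be reusable in a way that, when applied to groups, turns $3k+2$ things into $k+2$ things rather than into $\lceil(3k+2)/5\rceil$ things, so the branching factor of the recursion tree is $3$ rather than $5$.

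First I would establish the base case $\MAX_5 \in \relu_2$ explicitly. The geometric idea suggested by the abstract is to subdivide the relevant simplex (the normal fan of the graph of $\max\{x_1,\dots,x_5\}$, or equivalently the regions where a given $x_i$ is the max) into polytopes each of which corresponds to a CPWL function computable with one fewer layer. Concretely, I would look for linear functions $\ell_1,\dots,\ell_r$ of the five inputs so that $\MAX_5(x) = \MAX_m(\ell_1(x),\dots,\ell_m(x))$ with $m \le 3$ — no, more precisely so that $\MAX_5(x)$ equals a one-hidden-layer-computable (hence piecewise linear convex, or a difference involving one $\relu$ layer) combination of quantities each of which is itself one-hidden-layer computable from $x$. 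After the first $\relu$ layer we have available all values $\relu(a^\top x + b)$; the task is to pick $O(1)$ such neurons and an affine combination forming $\MAX_3$ of three new linear functions, then finish with the one-hidden-layer $\MAX_2$-tree for $\MAX_3$. I would search for this gadget by hand using the structure of $\max$: write $\max\{x_1,\dots,x_5\}$, try to split the five indices as $2+2+1$ or $3+1+1$, use $\MAX_2$ (free, affine-plus-one-ReLU) to merge pairs inside the first layer, and check that the resulting expression only needs two more layers. The key identity I expect to use repeatedly is $\MAX_2(x_1,x_2) = x_1 + \relu(x_2-x_1)$, and the fact that $\relu$ of an affine function of $x$ is exactly what a first-hidden-layer neuron outputs.

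For the inductive step, suppose $\MAX_{3^{n-1}+2} \in \relu_n$. Partition the $3^n+2$ inputs; the natural split is into three blocks of size $3^{n-1}$ plus the two ``extra'' inputs. The trick is to compute, using only the first hidden layer, enough information to replace each block of $3^{n-1}$ inputs together with appropriate bookkeeping so that what remains to be maxed is $3 \cdot (\text{something of size} \le \tfrac{3^{n-1}+2}{3}) $ ... more carefully: I want the first layer plus a single affine readout to reduce $\MAX_{3^n+2}$ to evaluating $\MAX_{3^{n-1}+2}$ applied to linear functions of the first-layer outputs, which by induction costs $n$ more hidden layers, giving $n+1$ total. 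This is where the $\MAX_5$-with-two-layers base case gets leveraged structurally: the same construction that turns $5 = 3\cdot 1 + 2$ inputs into a two-layer computation should, applied blockwise, turn $3k+2$ inputs into a computation that calls $\MAX_{k+2}$ once plus two extra hidden layers — and iterating this is exactly $\lceil \log_3(n-2)\rceil + 1$ layers for general $n$, specializing to the clean statement $\MAX_{3^n+2} \in \relu_{n+1}$.

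The main obstacle is the base case: finding the explicit two-hidden-layer circuit for $\MAX_5$ and verifying it is correct on all of $\R^5$ (equivalently, on every cell of the arrangement where a fixed coordinate is the maximum). Once the right set of first-layer neurons and the affine combination is identified, correctness is a finite check — compare the two CPWL functions region by region, or equivalently check that they have the same value and gradient on each full-dimensional cell of a common refinement — but discovering the gadget requires the geometric insight about subdividing the simplex that the abstract advertises. A secondary obstacle is making the recursion bookkeeping precise: ensuring that the ``$+2$'' surplus inputs are handled uniformly at every level so that the layer count telescopes exactly to $n+1$ rather than $n+1+O(1)$, and that the blockwise application of the base-case gadget produces linear (not just CPWL) functions as inputs to the recursive $\MAX_{3^{n-1}+2}$ call, which is what legitimizes invoking the induction hypothesis.
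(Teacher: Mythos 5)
Your outline matches the paper's proof in structure (induction on $n$; base case $\MAX_5\in\relu_2$; an inductive step that trades one extra hidden layer for a factor of~$3$ in the number of inputs), but as written it has two genuine gaps. The first is the base case itself, which you correctly identify as the main obstacle but do not resolve. This is not a routine finite search: $\MAX_5$ is \emph{not} obtained as a single two-layer max-tree, nor by reducing to $\MAX_3$ of first-layer outputs (note that $\MAX_3$ itself requires two hidden layers by Mukherjee--Basu, so your suggested finishing move ``$\MAX_3$ of three new linear functions, then a one-hidden-layer tree'' would cost three layers, not two). The paper instead writes $\MAX_5$ as a \emph{signed} linear combination of nine convex terms, each of the form $\max\bigl(\max(\cdot,\cdot),\ \max(\cdot,\cdot)+\max(\cdot,\cdot)\bigr)$ --- the inner \emph{sum} of two pairwise maxima is the one-layer-computable ingredient that breaks the binary-tree barrier --- and verifies by case analysis that eight of the nine terms cancel in pairs on each region. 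Without exhibiting such an identity (or an equivalent subdivision of $\Delta^4$), the theorem is not proved even for $n=1$.

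The second gap is the mechanism of the inductive step. Your arithmetic ($3k+2\mapsto k+2$) is the right target, but ``compute, using only the first hidden layer, enough information to replace each block'' is an assertion of the conclusion, not an argument: a single block of $3^{n-1}$ inputs cannot be collapsed to one first-layer quantity. What actually makes the recursion close in the paper is that the $\MAX_5$ identity, applied with a \emph{running maximum} $T$ substituted for the fifth argument, rewrites $\max(y_1,y_2,y_3,y_4,T)$ as a signed combination of terms $\max(T,\,\ell,\,\max(\cdot,\cdot)+\max(\cdot,\cdot))$; each application absorbs four fresh inputs at the cost of one new ``slot'' of the form $\max(\cdot,\cdot)+\max(\cdot,\cdot)$ (computable in one layer), a net reduction of three. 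Iterating this $3^{n-1}$ times inside a suitable vector space of functions (the paper's $\mathcal{T}_{a,b}$), and only then spending one layer to evaluate all the pairwise maxima, yields $\MAX_{3^{n-1}+2}$ applied to linear images of first-layer outputs. Your version also understates that the reduction produces a large linear combination of such recursive calls rather than ``one call'' (harmless for depth, since $\relu_{n,k}$ is closed under linear combinations, but it must be said, and the bookkeeping of which terms remain one-layer-computable after each substitution is exactly where a naive version of your plan would break).
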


By the discussion above, this implies that
every $\CPWL$ function defined on $\R^n$ can be represented with $\lceil\log_3(n-1)\rceil+1$ hidden layers.

\begin{theorem}\label{thm:main}
    For $n \geq 3$, we have $\CPWL_n=\relu_{n,\lceil\log_3(n-1)\rceil+1}$.
\end{theorem}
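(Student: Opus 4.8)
The plan is to deduce Theorem~\ref{thm:main} from Theorem~\ref{thm:maxtothe3} together with the Wang--Sun decomposition \eqref{eq:wangsun}. The inclusion $\relu_{n,\lceil\log_3(n-1)\rceil+1}\subseteq\CPWL_n$ is immediate, since every $\relu$ network computes a $\CPWL$ function~\cite{arora2018understanding}; so the content is the reverse inclusion $\CPWL_n\subseteq\relu_{n,\lceil\log_3(n-1)\rceil+1}$.

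First I would record the ``general $m$'' form of Theorem~\ref{thm:maxtothe3}: for every integer $m\geq 4$ one has $\MAX_m\in\relu_{\lceil\log_3(m-2)\rceil+1}$. Indeed, set $k=\lceil\log_3(m-2)\rceil$; since $m-2\geq 2$ we have $k\geq 1$, so Theorem~\ref{thm:maxtothe3} applies and gives $\MAX_{3^k+2}\in\relu_{k+1}$. By the choice of $k$ we have $m\leq 3^k+2$, and $\MAX_m$ is obtained from $\MAX_{3^k+2}$ by precomposing with the linear map $\R^m\to\R^{3^k+2}$, $(x_1,\dots,x_m)\mapsto(x_1,\dots,x_m,x_m,\dots,x_m)$, that pads with copies of the last coordinate. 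Folding this linear map into the first affine transformation of a network for $\MAX_{3^k+2}$ yields a network for $\MAX_m$ with the same number $k+1$ of hidden layers.

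Now fix $n\geq 3$ and $f\in\CPWL_n$, and let $k=\lceil\log_3(n-1)\rceil$, so that $(n+1)-2=n-1\leq 3^k$ and hence $m:=n+1\leq 3^k+2$ with $m\geq 4$. By \eqref{eq:wangsun} there are affine maps $A_1,\dots,A_s:\R^n\to\R^{n+1}$ and signs $\sigma_1,\dots,\sigma_s\in\{\pm1\}$ with $f(x)=\sum_{i=1}^s\sigma_i\,\MAX_{n+1}(A_i(x))$. By the previous paragraph $\MAX_{n+1}\in\relu_{k+1}$, and precomposing with the affine map $A_i$ is again free (absorb $A_i$ into the first affine transformation), so each summand $\MAX_{n+1}(A_i(\cdot))$ lies in $\relu_{n,k+1}$. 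Placing these $s$ networks in parallel and taking the $\sigma_i$-weighted sum in the final affine transformation (which is followed by no further $\relu$) produces a single $\relu$ network with $k+1$ hidden layers computing $f$. Thus $\CPWL_n\subseteq\relu_{n,\lceil\log_3(n-1)\rceil+1}$, which together with the trivial inclusion gives the claimed equality.

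I do not expect a serious obstacle in this argument: essentially all of the difficulty is concentrated in Theorem~\ref{thm:maxtothe3}. The only points that need a little care are (i) matching the ceiling $\lceil\log_3(n-1)\rceil$ to the correct power of three so that $\MAX_{n+1}$ is a coordinate-restriction of some $\MAX_{3^k+2}$; (ii) checking $k\geq 1$ so that the hypothesis $n\geq1$ of Theorem~\ref{thm:maxtothe3} is met (which holds because $n-1\geq 2$); and (iii) observing that output signs and affine pre-/post-composition cost no depth because they can be absorbed into the affine transformations of the network (and, if one insists on a network with exactly $k+1$ hidden layers, the shallower pieces can be padded using $x=\relu(x)-\relu(-x)$).
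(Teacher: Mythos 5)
Your proposal is correct and follows exactly the paper's route: the paper proves \Cref{thm:main} in one line by combining \Cref{thm:maxtothe3} with the Wang--Sun decomposition \eqref{eq:wangsun}, which is precisely your argument. You merely spell out the routine bookkeeping (padding $\MAX_{n+1}$ up to $\MAX_{3^k+2}$, absorbing the affine maps $A_i$, summing the parallel networks in the output layer) that the paper leaves implicit, and all of those details check out.
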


Compared to the previous upper bound~\cite{arora2018understanding}, we improve the asymptotically required depth by a constant factor of $\ln(2)/\ln(3)\approx0.63$. 
This surprising result shows that computing the maximum of $n$ numbers can be done in faster parallel time than $\log_2(n)$ even when all ``max operations'' have fan-in two. 
The summation operation allows to boost the parallel running time. 

Naturally, our construction does not fall into any of the special cases under which the conjecture has been proven before. However, interestingly, as we only use weights with denominators that are a power of two, our construction falls into the setting of Averkov, Hojny, and Merkert~\cite{averkov2025expressiveness}, who proved a $\lceil\log_3(n)\rceil$ lower bound on the number of hidden layers required for $\MAX_n$ in the special case of $\relu$ networks with weights that are decimal fractions. We almost match this lower bound (up to a difference of at most one layer).

It is noteworthy that \Cref{thm:maxtothe3} already improves the depth complexity of $\MAX_5$. 

\begin{prop}
\label{prop:max5}
The minimum number of hidden layers needed for computing $\MAX_5$ is exactly two.
\end{prop}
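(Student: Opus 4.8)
The plan is to establish the two directions separately. For the upper bound, namely $\MAX_5 \in \relu_{5,2}$, I would simply invoke \Cref{thm:maxtothe3} with $n=1$: since $3^1 + 2 = 5$, the theorem gives $\MAX_5 \in \relu_2$, i.e., the maximum of five numbers is computable with two hidden layers. So the content of the proposition reduces entirely to the matching lower bound: $\MAX_5 \notin \relu_{5,1}$, i.e., one hidden layer does not suffice.

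For the lower bound, the cleanest route is to reduce to the known fact that $\MAX_3$ requires two hidden layers, proved by Mukherjee and Basu~\cite{mukherjee2017lower}. The key observation is that $\relu_{n,1}$ is closed under restricting inputs to affine subspaces (substituting affine functions of fewer variables into the inputs): if $f(x_1,\dots,x_5)$ is computed by a one-hidden-layer network, then so is any function obtained by fixing some coordinates to constants or identifying coordinates. Now $\MAX_3(x_1,x_2,x_3) = \MAX_5(x_1,x_2,x_3,x_3,x_3)$ (or one may set two of the five inputs to $-\infty$; more rigorously, set $x_4 = x_5 = t$ for a parameter, or just identify $x_4=x_5=x_3$), which would show that a one-hidden-layer network for $\MAX_5$ yields one for $\MAX_3$, contradicting \cite{mukherjee2017lower}. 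I would want to state the restriction-closure property as a small lemma (it is immediate: composing the affine input map $T^{(1)}$ with the affine substitution keeps the architecture within one hidden layer) and then apply it.

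The main obstacle — really the only subtlety — is making the reduction airtight: one must check that the substitution used genuinely produces $\MAX_3$ (with the correct number of arguments, so that the Mukherjee–Basu result applies verbatim) rather than some degenerate variant, and that identifying variables, as opposed to only fixing them to constants, is permitted (it is, since it is still an affine map on the inputs). An alternative that sidesteps even this is to cite the computer-aided verification in \cite{hertrich2021towards} or the argument of \cite{grillo2025depth}, both of which directly establish that $\MAX_5$ needs at least two hidden layers; but the reduction from $\MAX_3$ is more self-contained and is the approach I would present. Combining the upper bound from \Cref{thm:maxtothe3} with this lower bound gives that the minimum number of hidden layers for $\MAX_5$ is exactly two.
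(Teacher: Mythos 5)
There is a genuine logical gap in your upper bound: it is circular. In this paper, \Cref{thm:maxtothe3} is proved by induction on $n$, and the base case $n=1$ is precisely \Cref{prop:max5}. You therefore cannot invoke the theorem to establish the proposition; the two-hidden-layer representation of $\MAX_5$ is the new ingredient on which the theorem is built, and it does not come for free from the more general statement. What is actually needed is the explicit identity $\MAX_5 = M$ of \Cref{claim:max5rep}, where $M$ in \Cref{eq-m} is a linear combination of the nine terms $P_1,\dots,P_4,Q,R_{13},R_{14},R_{23},R_{24}$. Each of these is a maximum of two quantities, each of which is either a maximum of affine functions or a sum of two maxima of affine functions; hence each term is computable with two hidden layers, and a linear combination of functions computable with two hidden layers is again computable with two hidden layers. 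That is the paper's (one-line) proof of the proposition, and its entire substance lives in verifying \Cref{claim:max5rep}.

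Your lower bound, by contrast, is fine and in fact more explicit than what the paper writes: the paper's proof of \Cref{prop:max5} addresses only the upper bound and leaves the necessity of two hidden layers implicit, resting on the known fact that $\MAX_3$ requires two hidden layers \cite{mukherjee2017lower}. Your reduction $\MAX_3(x_1,x_2,x_3)=\MAX_5(x_1,x_2,x_3,x_3,x_3)$, combined with the closure of one-hidden-layer networks under precomposition with affine maps (identifying variables is such a map, so there is no issue there), is exactly the right way to make that step rigorous. To repair the proposal, replace the appeal to \Cref{thm:maxtothe3} by a direct appeal to \Cref{claim:max5rep} together with the observation about the depth of the nine terms.
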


 This also implies that every function in $\CPWL_4$ can be implemented with two hidden layers, sharpening the contrast to the $\ICNN$ model, for which there are functions in $\CPWL_4$ that need arbitrary large depth (see \cite{valerdi2024minimal} for the proof using neighborly polytopes and \cite{bakaev2025depth} for a more detailed discussion).

The proof of our result is short and elementary, we provide it in \Cref{sec:proof}. However, it hides a beautiful geometric intuition that guided us in developing the proof and that might also be useful to further improve the current upper or lower bounds. We therefore explain this intuition in \Cref{sec:geometry}. The geometric intuition is based on relations between neural networks and polyhedral and tropical geometry, which were already studied in several previous works, see e.g.,~\cite{zhang2018tropical,maclagan2021introduction,maragos2021tropical,hertrich2021towards,huchette2023deep,haase2023lower,brandenburg2025decomposition,hertrich2024neural,valerdi2024minimal}.

We hope that our result initiates follow-up work on finding the true required depth to represent all $\CPWL$ functions. There is a gap to close between $2$ and $\lceil\log_3(n-1)\rceil+1$ hidden layers. While we resolve the previously open question of whether $\MAX_5$ can be represented with two hidden layers, it intriguingly remains open whether the same is possible for $\MAX_6$.

We mention a similarity to the computational complexity of multiplying two $n \times n$ matrices. The trivial algorithm runs in time $O(n^3)$.
In his seminal work, Strassen showed that multiplying two $2 \times 2$ matrices can be done with seven multiplications instead of eight~\cite{strassen1969gaussian}. This finite-dimensional improvement leads to a better algorithm with running time $O(n^{\log_2 7}) \approx O(n^{2.8})$. A long line of works (see~\cite{cohn2003group,coppersmith1982rapid,gall2018improved
} and the many references within) found better and better upper bounds on the  matrix multiplication exponent $\omega$
and the current best upper bound is $\omega \leq 2.371552$~\cite{alman2025more}.
Coming back to the maximum function, thinking of the depth parameter $d$ as fixed, we can trivially compute the maximum of $2^d$ numbers with $d$ hidden layers. The finite-dimensional \Cref{prop:max5} leads to a polynomial improvement. With $d$ hidden layers, we can compute the maximum of $\approx (5/2)^d$ numbers. With a more technical analysis this can be boosted to $\approx 3^d$. 
For the matrix multiplication exponent, we have the trivial lower bound $\omega \geq 2$. On the other hand, to the best of our knowledge it could be the case that two hidden layers suffice for computing the maximum of $n$ numbers for all $n$. 

\subsection{Universal approximation and required size.}

Our paper is concerned with the study of exact expressivity. The celebrated \emph{universal approximation theorems} (see~\cite{cybenko1989approximation,hornik1989multilayer} for initial versions and \cite{leshno1993multilayer} for a version that encompasses $\relu$) state that every continuous function on a bounded domain can be approximated with one hidden layer networks. Related to this, Safran, Reichman, and Valiant~\cite{safran2024many} studied how to efficiently approximate $\MAX_n$. Nevertheless, for a rigorous mathematical understanding of neural networks, it is inevitable to study the precise set of functions representable by different network architectures. Also, universal approximation theorems only work on bounded domains and might require a large number of neurons. In contrast, it is easy to verify that our construction needs only linearly many neurons to compute $\MAX_n$. It is, however, more tricky to determine the required size to represent an arbitrary $\CPWL$ function with our construction, as this depends on the number $s$ of terms in the representation \eqref{eq:wangsun}. The best known upper bound on $s$ was investigated in a series of works~\cite{he2018relu,hertrich2021towards,chen2022improved,brandenburg2025decomposition} and is $s\in \mathcal{O}(p^{n+1})$, where $p$ is the number of different affine functions represented locally by the given $\CPWL$ function. Another related stream of research is how one can trade depth and size against each other, with several results implying that mild increases in depth often allow significantly smaller neural networks~\cite{montufar14number,telgarsky2016benefits,eldan2016power,arora2018understanding,ergen24topology,safran2022depth}.

\section{Direct proofs}\label{sec:proof}

\subsection{Computing the maximum of five numbers with two hidden layers}
\label{sec-max5}

In this section we give an explicit representation of $\MAX_5$ with two hidden layers. To this end, let $M : \R^5 \to \R$ be defined by 
\begin{align}
\label{eq-m}
M(x) & = 0.5 \big( P_1(x)+P_2(x)+P_3(x)+P_4(x)+Q(x) \\
\notag & \qquad \qquad -R_{13}(x)-R_{14}(x)-R_{23}(x)-R_{24}(x) \big),
\end{align}
where
\begin{align*}
	P_1(x) &= \max ( \max (2x_5, x_1+x_2 ),   \max (x_1,x_3)+\max(x_1,x_4) ) 
\\ &= \max (2x_5,  x_1+x_2, 2x_1, x_1+x_3, x_1+x_4, x_3+x_4),
\displaybreak[1]
\\[1ex] 
 P_2(x) &= \max ( \max (2x_5,  x_1+x_2 ),   \max (x_2,x_3)+\max(x_2,x_4) )
\\ &= \max (2x_5,  x_1+x_2, 2x_2, x_2+x_3, x_2+x_4, x_3+x_4),
\displaybreak[1]
\\[1ex] 
 P_3(x) &= \max (  \max (2x_5,  x_3+x_4 ),   \max (x_3,x_1)+\max(x_3,x_2) )
\\ &= \max (2x_5,  x_3+x_4, 2x_3, x_3+x_1, x_3+x_2, x_1+x_2),
\displaybreak[1]
\\[1ex] 
 P_4(x) &= \max ( \max (2x_5,  x_3+x_4 ),   \max (x_4,x_1)+\max(x_4,x_2) )
\\ &= \max (2x_5,  x_3+x_4, 2x_4, x_4+x_1, x_4+x_2, x_1+x_2),
\displaybreak[1]
\\[1ex] 
Q(x)  &= \max (2x_5, \max (x_1+x_2, x_3+x_4 ) )
\\ &= \max (2x_5,  x_1+x_2, x_3 + x_4),
\displaybreak[1]
\\[1ex] 
R_{13}(x)  &= \max(\max(2x_5, x_1+x_3), \max(x_1+x_2, x_3+x_4) )
\\ &= \max (2x_5,  x_1+x_3, x_1+x_2, x_3+x_4 ),
\displaybreak[1]
\\[1ex]  
R_{14}(x)  &= \max(\max(2x_5, x_1+x_4), \max(x_1+x_2, x_3+x_4) )
\\ &= \max (2x_5,  x_1+x_4, x_1+x_2, x_3+x_4 ),
\displaybreak[1]
\\[1ex]  
R_{23}(x)  &=  \max(\max(2x_5, x_2+x_3), \max(x_1+x_2, x_3+x_4) )  
\\ &= \max (2x_5,  x_2+x_3, x_1+x_2, x_3+x_4 ),
\displaybreak[1]
\\[1ex] 
R_{24}(x)  &=  \max(\max(2x_5, x_2+x_4), \max(x_1+x_2, x_3+x_4) )  
\\ &= \max (2x_5,  x_2+x_4, x_1+x_2, x_3+x_4 ).
\end{align*}

\begin{claim}\label{claim:max5rep}
$\MAX_5 = M$.
\end{claim}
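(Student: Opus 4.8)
The plan is to verify the identity $\MAX_5=M$ pointwise on $\R^5$, using the symmetries of the definition of $M$ to reduce the verification to two short cases.

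\emph{Symmetry reduction.} First I would record that $M$ is invariant under the coordinate transpositions $x_1\leftrightarrow x_2$ and $x_3\leftrightarrow x_4$ and under the double transposition $(x_1,x_2)\leftrightarrow(x_3,x_4)$. In each case one reads off from the displayed formulas that the induced permutation of the nine functions $P_1,\dots,P_4,Q,R_{13},R_{14},R_{23},R_{24}$ leaves the signed sum in \eqref{eq-m} unchanged: for instance $x_1\leftrightarrow x_2$ swaps $P_1\leftrightarrow P_2$, $R_{13}\leftrightarrow R_{23}$, $R_{14}\leftrightarrow R_{24}$ and fixes $P_3,P_4,Q$; the double transposition swaps $P_1\leftrightarrow P_3$, $P_2\leftrightarrow P_4$, $R_{14}\leftrightarrow R_{23}$ and fixes $Q,R_{13},R_{24}$. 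Since $\MAX_5$ is invariant under all permutations of its arguments, I may assume $x_1\ge x_2$, $x_3\ge x_4$, and $x_1\ge x_3$; in particular $x_1=\max\{x_1,x_2,x_3,x_4\}$. Moreover each summand of $M$ is a maximum of degree‑two monomials (e.g.\ $2x_5=x_5+x_5$), so $P_\bullet(x+c\mathbf 1)=P_\bullet(x)+2c$; since the signed coefficients in \eqref{eq-m} sum to $4+1-4=1$, this gives $M(x+c\mathbf 1)=M(x)+c$, matching $\MAX_5$, so I may also translate the input freely.

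\emph{Case $x_5\ge x_1$.} Here $x_5\ge x_i$ for all $i\le 4$, hence $2x_5\ge x_i+x_j$ for every $i,j\in\{1,2,3,4\}$. By the second displayed expression for each of the nine functions, each is the maximum of the term $2x_5$ together with terms of the form $x_i+x_j$ with $i,j\le 4$, so each of them equals $2x_5$; therefore $2M=(4+1-4)\cdot 2x_5=2x_5=2\,\MAX_5(x)$.

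\emph{Case $x_1\ge x_5$.} Now $\MAX_5(x)=x_1$, and after subtracting $x_1$ from every coordinate I may assume $x_1=0$ and $x_2,x_3,x_4,x_5\le 0$, with in addition $x_4\le x_3$. The key point is that under these sign constraints every one of the nine maxima collapses, because a monomial $x_i+x_j$ with $i,j\neq 1$ is dominated by $\max\{x_i,x_j\}$ and $0=2x_1$ is among the terms of $P_1$. A short check gives $P_1=0$, $P_2=Q=R_{23}=R_{24}=\max(2x_5,x_2,x_3+x_4)$, $P_3=R_{13}=\max(2x_5,x_2,x_3)$, and $P_4=R_{14}=\max(2x_5,x_2,x_4)$. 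Substituting these into \eqref{eq-m}, the terms cancel in pairs and $2M=P_1+(P_2+Q)+P_3+P_4-(R_{13}+R_{14}+R_{23}+R_{24})=0$, so $M(x)=0=x_1=\MAX_5(x)$. Combining the two cases proves \Cref{claim:max5rep}.

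The argument is essentially bookkeeping; the only step needing care is the collapse in the last case, where for each of the nine functions one must correctly decide which of its (up to six) terms survive under $x_1=0\ge x_2,x_3,x_4$ and $x_3\ge x_4$, and then spot the pairwise cancellation. Alternatively, one can read \Cref{claim:max5rep} as an identity of support functions: $2\,\MAX_5$ is the support function of (twice) the standard simplex, and each of $P_1,\dots,P_4,Q,R_{13},R_{14},R_{23},R_{24}$ is the support function of the convex hull of the exponent vectors of its monomials, so the claim becomes a Minkowski‑sum identity of polytopes in $\R^5$; this is the geometric viewpoint developed in \Cref{sec:geometry}, but the pointwise computation above is more self‑contained.
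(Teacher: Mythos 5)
Your proposal is correct and follows essentially the same route as the paper's proof: the same symmetry group generated by $1\leftrightarrow 2$, $3\leftrightarrow 4$, $(1,2)\leftrightarrow(3,4)$, the same two-case split ($x_5$ maximal versus $x_1$ maximal), and the same cancellations ($P_2$ with $R_{23}$, $P_3$ with $R_{13}$, $P_4$ with $R_{14}$, $Q$ with $R_{24}$, leaving $\tfrac12 P_1$). The only differences are cosmetic conveniences — normalizing $x_1=0$ via translation equivariance and verifying the collapse of each term under the resulting sign constraints rather than via the paper's direct pairwise inequalities — and your collapsed values check out.
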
 

\begin{proof}
We use a simple case analysis.
If $\MAX_5(x) = x_5$ then $2x_5$ is not smaller than the sum of any pair of coordinates $x_1, x_2, x_3, x_4, x_5$, and then each of the nine terms defined above equals to $2x_5$, therefore $M(x_1,x_2,x_3,x_4,x_5) = x_5$. So now we can assume that $\max(x_1,x_2,x_3,x_4,x_5)$ is not $x_5$.

The expression $M(x)$ is symmetric with respect to the following operations: swapping indices $1$~and~$2$; swapping indices $3$ and $4$; simultaneously swapping the pairs
of indices $(1,2)$ and $(3,4)$. 
This is because with these swaps of indices some of the terms just swap with each other:
\begin{align*}
\text{if } 1 \leftrightarrow 2, \text{  then }&
\,&P_1&\leftrightarrow\!\!\!\!&&P_2,
\,&R_{13}&\leftrightarrow\!\!\!\!&&R_{23},
\,&R_{14}&\leftrightarrow\!\!\!\!&&R_{24}, &&\text{ other terms stay the same;}
\\ \text{if } 3 \leftrightarrow 4, \text{  then }&
\,&P_3&\leftrightarrow\!\!\!\!&&P_4,
\,&R_{13}&\leftrightarrow\!\!\!\!&&R_{14},
\,&R_{23}&\leftrightarrow\!\!\!\!&&R_{24}, &&\text{ other terms stay the same;}
\\ \text{if } (1,2) \leftrightarrow (3,4), \text{  then }&
\,&P_1&\leftrightarrow\!\!\!\!&&P_3,
\,&P_2&\leftrightarrow\!\!\!\!&&P_4,
\,&R_{14}&\leftrightarrow\!\!\!\!&&R_{23}, &&\text{ other terms stay the same.}
\end{align*}

So without loss of generality, we can suppose that
\[ x_1 =\max(x_1,x_2,x_3,x_4)=\max(x_1,x_2,x_3,x_4,x_5),\] 
and the other cases will follow by symmetry.
When $x_1$ is the largest input, we have: 

\begin{align*}
    x_1=&\ \MAX_5(x)
\!\!\!\!&&\Longrightarrow \ 
 P_1(x)\!\!\!\!\! &&=\max ( 2x_5,x_1+x_2,2x_1,x_1+x_3,x_1+x_4,x_3+x_4)
\\ & && &&= 2x_1,
\displaybreak[1] \\[1ex] 
x_2+x_1 &\geq 2x_2, && &&
\\  x_2+x_1 &\geq x_2+x_4 
\!\!\!\!&&\Longrightarrow \ 
 P_2(x)\!\!\!\!\! &&= \max (2x_5, x_1+x_2,2x_2,x_2+x_3,x_2+x_4,x_3+x_4)
\\ & && &&= \max (2x_5, x_1+x_2,x_2+x_3,x_3+x_4)
 \\ & && &&= R_{23}(x),
\displaybreak[1] \\[1ex] 
x_3+x_1 &\geq 2x_3, && &&
\\  x_3+x_1 &\geq x_3+x_2 
\!\!\!\!&&\Longrightarrow \ 
 P_3(x)\!\!\!\!\! &&= \max (2x_5,x_3+x_4,2x_3,x_3+x_1,x_3+x_2,x_1+x_2)
\\ & && &&= \max (2x_5,x_3+x_4,x_3+x_1,x_1+x_2)
\\ & && &&= R_{13}(x),
\displaybreak[1] \\[1ex] 
x_4+x_1 &\geq 2x_4, && &&
\\  x_4+x_1 &\geq x_4+x_2
\!\!\!\!&&\Longrightarrow \ 
 P_4(x)\!\!\!\!\! &&= \max  (2x_5,x_3+x_4,2x_4,x_4+x_1,x_4+x_2,x_1+x_2)
\\ & && &&= \max  (2x_5,x_3+x_4,x_4+x_1,x_1+x_2)
\\ & && &&= R_{14}(x),
\displaybreak[1] \\[1ex] 
x_1+x_2 &\geq x_2+x_4 
\!\!\!\!&&\Longrightarrow \ 
 R_{24}(x)\!\!\!\!\! &&= \max  (2x_5,x_2+x_4,x_1+x_2,x_3+x_4 )
\\  & && &&= \max  (2x_5,x_1+x_2,x_3+x_4 )
\\ & && &&= Q(x).
\end{align*}

We see that all but one of the terms in the sum cancel out:
\begin{align*}
	M(x) & = 0.5 \big( P_1(x)+P_2(x)+P_3(x)+P_4(x)+Q(x)
	%\\ &\qquad  \qquad  
    -R_{13}(x)-R_{14}(x)-R_{23}(x)-R_{24}(x) \big)
	\\& = 0.5 P_1(x) 
	%\\&
    = x_1, 
\end{align*}

which is $\MAX_5(x)$, concluding the proof.
\end{proof}

With this representation of the maximum function, we can prove \Cref{prop:max5}.

\begin{proof}[Proof of \Cref{prop:max5}]
    As each of the nine terms in the definition of $M$ can be computed with two hidden layers, the same is true for $M$ itself, and therefore, by \Cref{claim:max5rep}, for $\MAX_5$.
\end{proof}

\subsection{Improved upper bound on the depth}\label{sec:newupperbound}

Computing $\MAX_5$ in $2$ hidden layers already allows us to compute $\MAX_n$ in $2\lceil\log_5(n)\rceil\approx 0.86\log_2(n)$ hidden layers. In fact, \Cref{claim:max5rep} can be pushed further. In this section, we show how to compute the maximum of $3^n+2$ numbers with $n+1$ hidden layers. This allows $\MAX_n$ to be computed in $\lceil\log_3(n-2)\rceil+1\approx 0.63\log_2(n)$ hidden layers.

Let $T_{a,b}$ be the function $\R^{4a+b} \to \R$ that equals the maximum of $a$ elements of type $\max(x_1, x_2) + \max(x_3, x_4)$, and $b$ extra arguments:
\begin{align*}
T_{a,b}(x)  = \max \big( & \max(x_1,x_2)+\max(x_3,x_4), \max(x_5,x_6)+\max(x_7,x_8), \ldots \\
& \ldots, \max(x_{4a-3}, x_{4a-2})+\max(x_{4a-1}, x_{4a}), x_{4a+1}, \ldots , x_{4a+b}\big) .    
\end{align*}

Let $\mathcal{T}_{a, b}$ be the vector space spanned by the functions of the form $T_{a, b}\circ L$ for linear transformations $L:\R^{4a+b}\rightarrow \R^{4a+b}$. 

\begin{claim}
	\label{claim-max5}
    Viewing $T_{a, b+4}$ as a function $\R^{4a+b+4+1}\rightarrow\R$ that ignores the last coordinate, we have that $T_{a, b+4}\in \mathcal{T}_{a+1, b+1}$.
\end{claim}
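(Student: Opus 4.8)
The plan is to combine the explicit identity $\MAX_5=M$ of \Cref{claim:max5rep} with the recursive structure of the functions $T_{a,b}$. Write the argument of $T_{a,b+4}$ (before the dummy coordinate is appended) as $(t_1,\dots,t_{4a},s_1,\dots,s_{b+4})$, with the $t_i$ the block coordinates and the $s_j$ the single arguments, and set $w=T_{a,b}(t_1,\dots,t_{4a},s_1,\dots,s_b)$. Unfolding the definition of $T_{a,b+4}$ gives
\[
T_{a,b+4}=\max(w,s_{b+1},s_{b+2},s_{b+3},s_{b+4})=\MAX_5(s_{b+1},s_{b+2},s_{b+3},s_{b+4},w).
\]
Substituting $(x_1,x_2,x_3,x_4,x_5)\mapsto(s_{b+1},s_{b+2},s_{b+3},s_{b+4},w)$ into \Cref{claim:max5rep} gives $T_{a,b+4}=M(s_{b+1},s_{b+2},s_{b+3},s_{b+4},w)$, the fixed signed sum $0.5\bigl(P_1+P_2+P_3+P_4+Q-R_{13}-R_{14}-R_{23}-R_{24}\bigr)$ of the nine functions evaluated at this substitution. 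Since $\mathcal{T}_{a+1,b+1}$ is a vector space, it suffices to prove that each of these nine substituted terms is of the form $T_{a+1,b+1}\circ L$ for some linear $L\colon\R^{4a+b+5}\to\R^{4a+b+5}$; note $4(a+1)+(b+1)=4a+b+5$ matches the extended domain, the last coordinate being the dummy one, which no $L$ needs to use.

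The crucial structural point is that $x_5$ occurs in every one of the nine functions only through the term $2x_5$, hence only through $2w$. Because $T_{a,b}$ is positively homogeneous of degree one, $2w$ is itself $T_{a,b}$ applied to a linear reparametrization (double every block coordinate and every single argument of $w$), so $2w$ supplies exactly $a$ blocks and $b$ single arguments. One then checks, reading off the compact nested-max definitions, that the remainder of each term supplies precisely one more block and one more single argument. For $P_1,\dots,P_4$ the remainder is a genuine block $\max(\cdot,\cdot)+\max(\cdot,\cdot)$ together with one extra linear function: e.g.\ $P_1$ contributes the block $\max(s_{b+1},s_{b+3})+\max(s_{b+1},s_{b+4})$ and the argument $s_{b+1}+s_{b+2}$. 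For $Q$ the remainder is a single linear function, realized as a \emph{degenerate block} $\max(\ell,\ell)+\max(\ell',\ell')$, plus one more linear function. For each $R_{ij}$ the remainder is an expression $\max(p,q)$ with $p,q$ linear, realized as a degenerate block $\max(p,q)+\max(0,0)$, plus one more linear function. In every case the result is $T_{a+1,b+1}$ composed with an explicit linear map (whose coordinates are, as needed, coordinate projections, sums of two of them, doublings, or the zero functional), so the term lies in $\mathcal{T}_{a+1,b+1}$.

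Taking the signed combination then shows $T_{a,b+4}=M(s_{b+1},s_{b+2},s_{b+3},s_{b+4},w)\in\mathcal{T}_{a+1,b+1}$, as claimed. The one part that requires genuine care is this term-by-term bookkeeping: making sure that $Q$ and the four $R_{ij}$ — which carry no obvious block — are nonetheless expressed using exactly $a+1$ (possibly degenerate) blocks and exactly $b+1$ single arguments, rather than $b$ or $b+2$. The admissibility of degenerate blocks (obtained by letting $L$ set some block coordinates equal or to zero) is precisely what makes the counts line up. Everything else — unfolding $T_{a,b+4}$, invoking \Cref{claim:max5rep}, and using positive homogeneity together with the vector space structure of $\mathcal{T}_{a+1,b+1}$ — is routine.
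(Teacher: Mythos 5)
Your proposal is correct and follows essentially the same route as the paper: unfold $T_{a,b+4}$ as $\MAX_5$ of the four new arguments and $w=T_{a,b}$, substitute into the identity $\MAX_5=M$, and place each of the nine terms in $\mathcal{T}_{a+1,b+1}$ using homogeneity and the vector space structure. The only (cosmetic) difference is bookkeeping: you realize $Q$ and the $R_{ij}$ directly in $\mathcal{T}_{a+1,b+1}$ via degenerate blocks, whereas the paper first places them in $\mathcal{T}_{a,b+2}$ and $\mathcal{T}_{a,b+3}$ and then invokes the inclusions $\mathcal{T}_{a,b+2}\subseteq\mathcal{T}_{a,b+3}\subseteq\mathcal{T}_{a+1,b+1}$, which rest on the same trick $\max(x_1,x_2)=\max(x_1,x_2)+\max(0,0)$.
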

\begin{proof}
Write $T_{a, b+4}=\max(x_{i+1}, x_{i+2}, x_{i+3}, x_{i+4}, T_{a, b})$, where $i=4a+b$. By \Cref{eq-m}, this can be written as a linear combination of the nine functions $P_1, P_2, P_3, P_4, Q, R_{13}, R_{14}, R_{23}$ and $R_{24}$. Here, for example,
\begin{align*}
P_1(x)
& =\max\big(2T_{a, b}, x_{i+1}+x_{i+2}, \max(x_{i+1}, x_{i+3})+\max(x_{i+1}, x_{i+4})\big) \\
& =2 \max \left( T_{a, b}, \frac{x_{i+1}+x_{i+2}}{2}, \max \left(\frac{x_{i+1}}{2}, \frac{x_{i+3}}{2} \right)+\max\left(\frac{x_{i+1}}{2}, \frac{x_{i+4}}{2} \right) \right) , \\  
Q(x)  
 &= 2 \max \left( T_{a, b},  \frac{x_{i+1}+x_{i+2}}{2}, \frac{x_{i+3} + x_{i+4}}{2} \right) , \\
R_{13}(x)  &= 2 \max \left( T_{a, b},  \frac{x_{i+1}+x_{i+3}}{2}, \frac{x_{i+1}+x_{i+2}}{2}, \frac{x_{i+3}+x_{i+4}}{2} \right) .
\end{align*}
It follows that \begin{align*}
P_1(x) & \in \mathcal{T}_{a+1, b+1}  ,  \\
Q(x) &\in \mathcal{T}_{a, b+2}, 
\\ 
R_{13}(x) &\in \mathcal{T}_{a, b+3} . 
\end{align*}
Similarly, 
\begin{align*}
	P_2(x), P_3(x), P_4(x)  &\in \mathcal{T}_{a+1, b+1}, 
\\ R_{14}(x), R_{23}(x), R_{24}(x)  &\in \mathcal{T}_{a, b+3}.
\end{align*}
Since $\max (x_1, x_2)=\max (\max(x_1,x_2) + \max(0,0))$, we have that $\mathcal{T}_{a, b+2} \subseteq \mathcal{T}_{a, b+3} \subseteq \mathcal{T}_{a+1, b+1}$ (by ignoring the last coordinates). We can conclude that $T_{a, b+4}$ is in $\mathcal{T}_{a+1, b+1}$. 
\end{proof}

\begin{claim}\label{claim:compose-max}
	If the function $T_{3^{n-1},2}$ can be computed in depth $k$, then so can $T_{0,3^n+2}$.
\end{claim}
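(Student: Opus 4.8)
The plan is to prove this by iterating \Cref{claim-max5} exactly $3^{n-1}$ times, together with the elementary observation that membership of a function in one of the spaces $\mathcal{T}_{a,b}$ never increases the depth required to compute it.

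First I would record that observation precisely: if $T_{a,b}$ can be computed in depth $k$, then so can every function in $\mathcal{T}_{a,b}$. Indeed, a function in $\mathcal{T}_{a,b}$ has the form $\sum_i c_i(T_{a,b}\circ L_i)$ for scalars $c_i$ and linear maps $L_i$. Precomposing a depth-$k$ network for $T_{a,b}$ with $L_i$ merely replaces the first affine transformation $T^{(1)}$ by $T^{(1)}\circ L_i$, which is again affine, so each $T_{a,b}\circ L_i$ is computed in depth $k$; the linear combination is then obtained by running these networks in parallel --- taking the direct sum of their hidden layers, which keeps the depth at $k$ --- and combining the outputs in the final affine transformation. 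Adjoining or deleting a dummy input coordinate, as in the statement of \Cref{claim-max5}, is likewise free. Combining this with \Cref{claim-max5} yields the key implication: for all integers $a,b\geq 0$, if $T_{a+1,b+1}$ can be computed in depth $k$, then so can $T_{a,b+4}$.

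It then remains to apply this implication $3^{n-1}$ times, starting from the hypothesis that $T_{3^{n-1},2}$ is computable in depth $k$. Invoking it with indices $(a,b)=(3^{n-1}-1,1)$ shows $T_{3^{n-1}-1,5}$ is computable in depth $k$; invoking it with $(a,b)=(3^{n-1}-2,4)$ shows $T_{3^{n-1}-2,8}$ is; and in general, after $t$ applications one reaches $T_{3^{n-1}-t,\,3t+2}$, the $t$-th application using \Cref{claim-max5} with the pair $(a,b)=(3^{n-1}-t,\,3t-2)$, whose entries are nonnegative for all $1\leq t\leq 3^{n-1}$ (the final application, $t=3^{n-1}$, uses $(a,b)=(0,\,3^n-2)$, and $3^n-2\geq 0$ since $n\geq 1$). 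After exactly $t=3^{n-1}$ applications we obtain $T_{0,\,3\cdot 3^{n-1}+2}=T_{0,3^n+2}$, computable in depth $k$, as desired. The argument is essentially bookkeeping; there is no genuine obstacle, the only points needing care being that the recursion $a\mapsto a-1,\ b\mapsto b+3$ lands exactly on $(0,3^n+2)$ after $3^{n-1}$ steps with every intermediate pair admissible, and that the reduction ``$\mathcal{T}_{a,b}$-membership preserves depth'' is set up correctly (parallel composition for the sum, absorption into the first affine layer for the precomposition).
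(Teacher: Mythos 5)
Your proposal is correct and follows essentially the same route as the paper: iterating \Cref{claim-max5} exactly $3^{n-1}$ times along the chain $(3^{n-1},2)\to(3^{n-1}-1,5)\to\cdots\to(0,3^n+2)$, which the paper phrases as the chain of inclusions $\mathcal{T}_{0,3^n+2}\subseteq\mathcal{T}_{1,3^n-1}\subseteq\cdots\subseteq\mathcal{T}_{3^{n-1},2}$. Your only addition is to spell out the (correct) observation that membership in $\mathcal{T}_{a,b}$ preserves depth via absorbing the linear precomposition into the first affine layer and summing parallel copies, a point the paper leaves implicit.
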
 
\begin{proof}
Since $T_{a, b+4}\in \mathcal{T}_{a+1, b+1}$, we have $\mathcal{T}_{a, b+4}\subseteq \mathcal{T}_{a+1, b+1}$. Applying this $3^{n-1}$ times, 
\[\mathcal{T}_{0, 3^n+2} 
\subseteq \mathcal{T}_{1, 3^n-3+2}
\subseteq \mathcal{T}_{2, 3^n-6+2} 
\subseteq \mathcal{T}_{3, 3^n-9+2} 
\subseteq 
\cdots
\subseteq \mathcal{T}_{3^{n-1}, 2}.
\qedhere
\]
\end{proof}

\begin{claim}\label{claim:next-layer}
	If the function $T_{0,3^{n-1}+2}$ can be computed in depth $k-1$, then $T_{3^{n-1},2}$ can be computed in depth $k$.
\end{claim}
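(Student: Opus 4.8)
The plan is to obtain a depth-$k$ network for $T_{3^{n-1},2}$ by \emph{prepending a single hidden layer} to the given depth-$(k-1)$ network for $T_{0,3^{n-1}+2}$. The starting observation is that $T_{0,b}$ is by definition just $\MAX_b$, and that $T_{3^{n-1},2}$ is obtained from $\MAX_{3^{n-1}+2}$ by plugging the ``pair-of-maxima'' expressions
\[
y_i \;=\; \max(x_{4i-3},x_{4i-2}) + \max(x_{4i-1},x_{4i}), \qquad i = 1,\dots,3^{n-1},
\]
into its first $3^{n-1}$ slots and leaving the last two slots equal to $x_{4\cdot 3^{n-1}+1}$ and $x_{4\cdot 3^{n-1}+2}$. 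Thus
\[
T_{3^{n-1},2}(x) \;=\; T_{0,3^{n-1}+2}\bigl(y_1,\dots,y_{3^{n-1}},\, x_{4\cdot 3^{n-1}+1},\, x_{4\cdot 3^{n-1}+2}\bigr),
\]
so it suffices to compute the vector $z = (y_1,\dots,y_{3^{n-1}}, x_{4\cdot 3^{n-1}+1}, x_{4\cdot 3^{n-1}+2}) \in \R^{3^{n-1}+2}$ from $x$ using one hidden layer, and then route $z$ into the given network for $T_{0,3^{n-1}+2}$.

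For the one-layer preprocessing I would use the two standard gadgets: $\max(a,b) = \relu(a-b) + b$ shows that each of the $2\cdot 3^{n-1}$ pairwise maxima appearing in the $y_i$ needs only one ReLU layer, and $t = \relu(t) - \relu(-t)$ lets us carry the finitely many linear terms we still need through that same ReLU layer. Concretely, the prepended hidden layer computes, for every $i$, the units $\relu(x_{4i-2}-x_{4i-3})$ and $\relu(x_{4i}-x_{4i-1})$, together with $\relu(\pm x_{4i-3})$, $\relu(\pm x_{4i-1})$ and $\relu(\pm x_{4\cdot 3^{n-1}+1})$, $\relu(\pm x_{4\cdot 3^{n-1}+2})$. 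After this layer each coordinate of $z$ is a linear function of the hidden units, since
\[
y_i = x_{4i-3} + \relu(x_{4i-2}-x_{4i-3}) + x_{4i-1} + \relu(x_{4i}-x_{4i-1}),
\]
and the two extra coordinates are recovered via $x_j = \relu(x_j) - \relu(-x_j)$. This uses $O(3^{n-1})$ neurons, so the size stays linear in the input dimension, although that is not needed for the claim.

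It remains to glue the pieces together. The given depth-$(k-1)$ network for $T_{0,3^{n-1}+2} = \MAX_{3^{n-1}+2}$ begins with an affine map $\R^{3^{n-1}+2}\to\R^{d_1}$ followed by a ReLU layer; precomposing that affine map with the linear map that extracts $z$ from the prepended hidden units is again affine and therefore costs no extra layer. The resulting network is obtained from the depth-$(k-1)$ one by prepending exactly one hidden layer, hence has depth $k$, and it computes $T_{3^{n-1},2}$. I do not anticipate a genuine difficulty here; the only points requiring care are that consecutive affine maps compose to a single affine map (so inserting $z$ between the two networks does not add a layer), and that the linear terms $x_{4i-3}, x_{4i-1}$ and the two extra inputs must be pushed through the new ReLU layer via $t = \relu(t)-\relu(-t)$, since the networks considered here have no skip connections.
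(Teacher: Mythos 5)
Your proposal is correct and is essentially the paper's proof: the paper also computes the $3^{n-1}$ expressions $\max(\cdot,\cdot)+\max(\cdot,\cdot)$ in one prepended hidden layer and then feeds them, together with the two extra inputs, into the given network for $T_{0,3^{n-1}+2}$. You have merely spelled out the routine implementation details (the gadgets $\max(a,b)=\relu(a-b)+b$ and $t=\relu(t)-\relu(-t)$, and the composition of consecutive affine maps) that the paper leaves implicit.
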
 
\begin{proof}
 Compute $3^{n-1}$ elements of type $\max(x_1, x_2) + \max(x_3, x_4)$ in the first layer, then apply $T_{0,3^{n-1}+2}$ to them and two extra inputs.
\end{proof}

\begin{proof}[Proof of \Cref{thm:maxtothe3}]
The proof is by induction. The case $n=1$ is \Cref{prop:max5}. For the induction step, we simply combine the previous two claims.
Assume that $\MAX_{3^{n-1}+2} = T_{0,3^{n-1}+2}$ can be computed with $n$ hidden layers. 
By \Cref{claim:next-layer}, we know that
$T_{3^{n-1},2}$ can be computed with $n+1$ hidden layers. 
By \Cref{claim:compose-max}, we now know that
$T_{0,3^n+2}=\MAX_{3^n+2}$ can also be computed with $n+1$ hidden layers.
\end{proof}

\begin{proof}[Proof of \Cref{thm:main}]
    This follows from \Cref{thm:maxtothe3}, using the fact that every function in $\CPWL_n$ can be written as a linear combination of maxima of $n+1$ affine terms~\cite{wang2005generalization}.
\end{proof}

\begin{remark}
The inductive constructions described in the proofs of the Claims \ref{claim-max5} and \ref{claim:next-layer} imply that the weights of the obtained networks are fractions with denominators that are powers of two, in other words, binary fractions. Such fractions can also be represented as decimal fractions.
\end{remark}

\section{Geometric intuition}\label{sec:geometry}

Even though it is not necessary for the proof of our main theorem, in this section we describe the geometric intuition behind our constructions.
The geometric perspective may help to understand the constructions and may be useful in future research.
The connection to geometry is based on a well-known fundamental equivalence between (positively homogeneous and convex) $\CPWL$ functions and polytopes (see~\cite{zhang2018tropical,maclagan2021introduction,maragos2021tropical,hertrich2021towards,huchette2023deep,haase2023lower,brandenburg2025decomposition,hertrich2024neural,valerdi2024minimal}
and the references within), which we describe next.

\subsection{Polytopes, support functions, and neural networks}

A polytope is the convex hull of finitely many points in $\R^n$. We refer to \cite{gr2003unbaum} for standard terminology and theory of polytopes. Each polytope $P$ can be uniquely associated with its \emph{support function}
\[h_P(x) = \max \{ \ip{x}{p} : p \in P\}.\]
The support function $h_P$ is convex, positively homogeneous and $\CPWL$. Conversely, for each such function $f$ there exists a unique polytope $P$ with $f=h_P$, which is sometimes called the \emph{Newton polytope} of $f$, referring to the corresponding concept in tropical geometry.

The Newton polytope of $\MAX_n$ is the $(n-1)$-dimensional simplex $\Delta^{n-1}=\conv\{e_1,\dots,e_n\}$ in $\R^n$, where $e_i$ is the $i$'th standard unit vector.
Understanding the $\relu$ depth complexity of $\MAX_n$ translates to a question about generating $\Delta^{n-1}$ in the following model (see e.g.~\cite{hertrich2021towards,valerdi2024minimal}). For two sets $P,Q \subseteq \R^n$, their \emph{Minkowski sum} is defined as
\[P+Q = \{p+q : p \in P, q \in Q\}.\]
Their convex hull is denoted by
\[P*Q = \conv(P \cup Q).\]
Denote by $\cP_{0}$ the set of polytopes consisting of a single point (for a fixed dimension $n$). For $k \geq 0$, define
\[\cP_{k+1} = \Big\{ \sum_i P_i * Q_i : \forall i \ P_i,Q_i \in \cP_{k} \Big\},\]
where the sum over $i$ is finite. This leads to a computational complexity measure for polytopes. The complexity of a polytope $P$ is the minimum $k$ so that $P \in \cP_k$.

This complexity measure does not, however, capture $\relu$ depth complexity. The missing piece is formal Minkowski difference. A polytope $X$ is a formal Minkowski difference of $P$ and $Q$ if $X+P = Q$.
As the following lemma shows, this computational complexity measure is equivalent to $\relu$ depth.

\begin{lemma}[see \cite{hertrich2022facets}, Theorem 3.35]\label{lem:polytopereformulation}
For every polytope $X$,
the required number of hidden layers for representing $h_X$ is the minimum $k$ so that there are $P,Q \in \cP_{k}$ so that
\[X + P = Q.\]
\end{lemma}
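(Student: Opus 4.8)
\emph{Proof idea.} The plan is to pass through support functions. Recall that $h_{P+Q}=h_P+h_Q$, that $h_{P*Q}=\max(h_P,h_Q)$, and that $h_{\{p\}}=\ip{p}{\cdot}$; combined with the bijection between polytopes and convex positively homogeneous $\CPWL$ functions, these identities show (by a straightforward induction) that $P\mapsto h_P$ carries $\cP_k$ bijectively onto the family $\mathcal{H}_k$ of functions $\R^n\to\R$ defined by $\mathcal{H}_0=\{\text{linear functions}\}$ and $\mathcal{H}_{k+1}=\{\sum_{i=1}^m\max(f_i,g_i):m\in\N,\ f_i,g_i\in\mathcal{H}_k\}$. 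Each $\mathcal{H}_k$ is closed under addition and under multiplication by positive scalars, and $\mathcal{H}_k\subseteq\mathcal{H}_{k+1}$. Under this dictionary, the existence of $P,Q\in\cP_k$ with $X+P=Q$ is equivalent to $h_X=h_Q-h_P$ with $h_P,h_Q\in\mathcal{H}_k$, that is, to $h_X\in\mathcal{H}_k-\mathcal{H}_k$. So the task reduces to proving that a convex positively homogeneous $\CPWL$ function $f$ lies in $\relu_{n,k}$ if and only if $f\in\mathcal{H}_k-\mathcal{H}_k$; the lemma then follows by minimizing over $k$.

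For the ``if'' direction I would show $\mathcal{H}_k-\mathcal{H}_k\subseteq\relu_{n,k}$ by induction on $k$, using three observations: (a) $\relu_{n,k}$ is a vector space, since two networks can be run in parallel and combined by the affine output map; (b) $\relu_{n,k}\subseteq\relu_{n,k+1}$, since any $g\in\relu_{n,k}$ can be delayed by one layer via $g=\relu(g)-\relu(-g)$ at the output; and (c) $\max(f,g)=g+\relu(f-g)$, so if $f,g\in\relu_{n,k}$ then $\max(f,g)\in\relu_{n,k+1}$. The base case $\mathcal{H}_0-\mathcal{H}_0=\{\text{linear functions}\}\subseteq\relu_{n,0}$ is clear; for the step, (c) and (a) give $\mathcal{H}_{k+1}\subseteq\relu_{n,k+1}$, and since the latter is a vector space, $\mathcal{H}_{k+1}-\mathcal{H}_{k+1}\subseteq\relu_{n,k+1}$.

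The substantive direction is that $h_X\in\relu_{n,k}$ forces $h_X\in\mathcal{H}_k-\mathcal{H}_k$, and this is where the main obstacle lies: a $\relu$ network carries biases and computes non-homogeneous functions internally, so a homogenization step is needed. I would first prove, by unwinding the network layer by layer, that every $f\in\relu_{n,k}$ can be written as $f=g-h$ with $g,h$ in the \emph{affine} analogue $\cA_k$ of $\mathcal{H}_k$ (so $\cA_0$ is the set of affine functions and $\cA_{k+1}$ the finite sums of $\max(f_i,g_i)$ with $f_i,g_i\in\cA_k$): the first affine map produces affine functions; if a pre-activation has the form $u-v$ with $u,v\in\cA_j$, then $\relu(u-v)=\max(u,v)-v\in\cA_{j+1}-\cA_{j+1}$; and an affine combination of functions in $\cA_{j+1}-\cA_{j+1}$ stays there because each $\cA_j$ is closed under addition, under multiplication by positive scalars, and contains the constants. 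Then I would apply the recession map $f\mapsto f^\infty$ with $f^\infty(x):=\lim_{t\to\infty}f(tx)/t$: it is linear on $\CPWL$ functions, is the identity on positively homogeneous ones, commutes with $\max$ and with finite sums, and sends an affine function to its linear part, hence maps $\cA_k$ into $\mathcal{H}_k$. Applying it to $h_X=g-h$ yields $h_X=h_X^\infty=g^\infty-h^\infty\in\mathcal{H}_k-\mathcal{H}_k$. (Alternatively, one may invoke the fact from \cite{hertrich2021towards} that a positively homogeneous function is computable by a bias-free network of the same depth, after which all intermediate functions are already homogeneous and the recession argument is unnecessary.) Translating back through the dictionary, $h^\infty=h_P$ and $g^\infty=h_Q$ for unique $P,Q\in\cP_k$, and $h_X+h_P=h_Q$ is exactly $X+P=Q$, completing the proof.
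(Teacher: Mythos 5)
The paper does not prove this lemma; it imports it verbatim from \cite{hertrich2022facets} (Theorem 3.35). Your argument is correct and is essentially the standard proof from that source and from \cite{hertrich2021towards}: the dictionary $\cP_k\leftrightarrow\mathcal{H}_k$ via support functions, the easy inclusion $\mathcal{H}_k-\mathcal{H}_k\subseteq\relu_{n,k}$, and the converse via homogenization (your recession-map computation is the same mechanism used there to reduce to bias-free networks), so there is nothing to flag.
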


We mostly care about $X = \Delta^{n-1}$.
In other words, we want to write $\Delta^{n-1}$ as a formal Minkowski differences of polytopes
in $\cP_k$ for the smallest $k$ possible. Understanding the $\relu$ depth complexity of the maximum function---and hence of all $\CPWL$ functions---boils down to understanding formal Minkowski differences. 
 
\subsection{Upper bounds on the depth via subdivisions}

By \Cref{lem:polytopereformulation}, we wish to write $\Delta^{n-1}$ as a formal Minkowski difference. We need to construct two polytopes $P,Q \in \cP_k$ for some small $k$, and show that $\Delta^{n-1} + P = Q$.
This task is not so easy because $P$ and $Q$ should be somehow ``coupled'' for this equation to hold.
We now explain how to replace the pair $(P,Q)$ by a subdivision of $\Delta^{n-1}$.
The advantage is that subdivisions are easier to comprehend and hence to find.

The simple fact we start with is that the identity map is what is called a \emph{valuation}: if $P$ and $Q$ are polytopes so that $P \cup Q$ is a polytope, then
\begin{align}
(P \cup Q) + (P \cap Q) = P+Q . \label{eqn:val}  
\end{align}
The deeper fact we use is an inclusion-exclusion extension of the valuation property.
It is called the \emph{full additivity} of the identity map; see for example 5.20 in~\cite{mcmullen1983valuations} and Chapter 6 in Schneider's book~\cite{schneider2013convex}.

\begin{lemma}[full additivity]
\label{lem:fullAd}
Let $X$ and $Q_1,\ldots,Q_m$ be polytopes in $\R^n$ so that $X = \bigcup_{i \in [m]} Q_i$.
Let $T^0$ be the collection of non-empty sets $S \subseteq [m]$ of even size.
Let $T^1$ be the collection of sets $S \subseteq [m]$ of odd size.
Then, 
\[X+\sum_{S \in T^0} \ Q_S = \sum_{S \in T^1} \ Q_S\]
where $Q_S:= \bigcap_{i \in S} Q_i$.
\end{lemma}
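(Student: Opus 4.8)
The plan is to prove \Cref{lem:fullAd} by induction on $m$, the number of polytopes in the cover, using the basic valuation identity \eqref{eqn:val} as the base/inductive engine. I would first record the convention that $Q_\emptyset := X = \bigcup_i Q_i$ is not among the sums (since $T^0$ contains only nonempty $S$), and that all Minkowski sums are taken in the monoid of polytopes, so I may freely cancel common summands once equalities are established. The statement to be proven can be rewritten in the symmetric ``moving everything to one side'' form
\[
\sum_{\substack{\emptyset \neq S \subseteq [m]}} (-1)^{|S|+1} Q_S = X,
\]
interpreted as the identity $X + \sum_{|S| \text{ even}} Q_S = \sum_{|S| \text{ odd}} Q_S$; this is the form that behaves well under induction.

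The induction step is the heart of the argument. Assume the claim for covers by fewer than $m$ polytopes. Given $X = Q_1 \cup \cdots \cup Q_m$, set $Y := Q_1 \cup \cdots \cup Q_{m-1}$, so that $X = Y \cup Q_m$ and, crucially, $Y \cap Q_m = (Q_1 \cap Q_m) \cup \cdots \cup (Q_{m-1} \cap Q_m)$ is itself a union of $m-1$ polytopes. Applying the base valuation \eqref{eqn:val} to the pair $(Y, Q_m)$ gives $X + (Y \cap Q_m) = Y + Q_m$. Now I apply the inductive hypothesis three times: once to the cover of $Y$ by $Q_1,\dots,Q_{m-1}$, once to the cover of $Y \cap Q_m$ by the polytopes $Q_i \cap Q_m$ for $i \in [m-1]$, and then substitute. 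The terms arising from the cover of $Y$ are $\pm Q_S$ for $\emptyset \neq S \subseteq [m-1]$; the terms from the cover of $Y \cap Q_m$ are $\pm\bigcap_{i\in S}(Q_i \cap Q_m) = \pm Q_{S \cup \{m\}}$ for $\emptyset \neq S \subseteq [m-1]$; and $Q_m$ itself is $Q_{\{m\}}$. A careful bookkeeping check then shows that the signs match: a set $S \ni m$ with $S \neq \{m\}$ corresponds to a set $S \setminus \{m\} \subseteq [m-1]$, and the parity flips exactly as the sign $(-1)^{|S|+1}$ demands because $Y \cap Q_m$ sits on the opposite side of \eqref{eqn:val} from $X$.

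I would present the bookkeeping cleanly by fixing the one-sided signed-sum notation above and verifying that
\[
\underbrace{X}_{} = \underbrace{Y + Q_m - (Y \cap Q_m)}_{\text{base valuation}} = \Big(\sum_{\emptyset \neq S \subseteq [m-1]} (-1)^{|S|+1} Q_S\Big) + Q_{\{m\}} - \Big(\sum_{\emptyset \neq S \subseteq [m-1]} (-1)^{|S|+1} Q_{S \cup \{m\}}\Big),
\]
and that the right-hand side, after the reindexing $S' = S \cup \{m\}$ in the last sum (note $(-1)^{|S|+1} = -(-1)^{|S'|+1}$ since $|S'| = |S|+1$), collapses to $\sum_{\emptyset \neq S \subseteq [m]} (-1)^{|S|+1} Q_S$, which is exactly the claim. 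The base case $m = 1$ is the tautology $X = Q_1$, and $m = 2$ is literally \eqref{eqn:val}.

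The main obstacle is not conceptual but organizational: one must be scrupulous that $Y \cap Q_m = \bigcup_{i=1}^{m-1}(Q_i \cap Q_m)$ really is a valid cover by polytopes (it is, since each $Q_i \cap Q_m$ is a polytope and their union is the polytope $Y \cap Q_m$ — this uses that intersections of the $Q_i$ are polytopes, which holds as all the $Q_i$ are polytopes contained in the polytope $X$), and that the sign/parity accounting in the reindexing is done on the correct side of the valuation identity. A secondary subtlety worth a remark is that the Minkowski-sum ``cancellation'' used to pass between the two-sided and one-sided forms is legitimate because polytopes under Minkowski addition form a cancellative monoid (equivalently, support functions add), so the manipulations are rigorous and not merely formal.
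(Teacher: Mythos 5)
There is a genuine gap, and it is precisely the trap that makes this lemma hard. Your induction hinges on applying both the valuation identity \eqref{eqn:val} and the inductive hypothesis to $Y := Q_1 \cup \cdots \cup Q_{m-1}$ and to $Y \cap Q_m$. But \eqref{eqn:val} is only stated (and only true in the form used) for $P$ and $Q$ that are themselves polytopes, and the lemma's inductive hypothesis likewise requires the union $\bigcup_{i<m} Q_i$ to be a polytope. Neither is guaranteed: removing a single piece from a polytopal cover generally destroys convexity. Concretely, subdivide a triangle $X \subset \R^2$ into three triangles $Q_1,Q_2,Q_3$ by coning the edges over an interior point; then $Y = Q_1 \cup Q_2$ is $X$ minus an open triangle, which is not convex, so you can neither invoke \eqref{eqn:val} for the pair $(Y,Q_3)$ nor apply the $m=2$ case to the ``cover'' of $Y$. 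Your parenthetical defense of $Y \cap Q_m$ being a polytope quietly assumes $Y$ is convex, which is the very point at issue; and the induction never addresses $Y$ itself. The sign bookkeeping and the cancellativity of Minkowski addition (via support functions) are fine, but they are not where the difficulty lives.

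This failure is not repairable by reordering the $Q_i$ or by a cleverer choice of $Y$: in general no proper subfamily of a polytopal subdivision has convex union. The actual proof must leave the category of polytopes entirely. One extends the support function to a valuation on the lattice of finite unions of polytopes (using the Euler characteristic to define the extension), proves full additivity at that level, and only then specializes back to the polytope $X$; see McMullen and Schneider for this machinery. Your strategy, as written, proves the $m=2$ case and nothing more.
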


It is worth mentioning that \Cref{lem:fullAd} leads to a different proof of the important \Cref{eq:wangsun}
that was proved by
Wang and Sun~\cite{wang2005generalization}, compare also~\cite{hertrich2021towards}.
The explanation, in a nutshell, is as follows. 
Every polytope $P$ can be triangulated, that is, subdivided into simplices.
Then, by the \Cref{lem:fullAd},
the support function $h_P$ can be written as a difference between the support functions of simplices. 
In other words, $h_P$ is a signed-sum of $\MAX_{n+1}$ functions. 

The proof of \Cref{lem:fullAd} is quite deep. Like the inclusion-exclusion formula, it is tempting to prove it by induction. But as opposed to the inclusion-exclusion formula, if we remove a single polytope $Q_i$ then the relevant union may no longer be a polytope and the induction hypothesis can not be applied. 
We must significantly strengthen the lemma in order to prove it. The proof is obtained in several steps. 
First, it uses the equivalence between polytopes and their support functions. Second, it extends the support function to the lattice of finite unions of polytopes. This extension is topological and uses the Euler characteristic. Third, the Euler characteristic is shown to be fully additive. Finally, we can now deduce that the extended support function is fully additive, and hence the identity map on polytopes is fully additive (the identity map may not be fully additive on the lattice of finite unions of polytopes).

\Cref{lem:fullAd} is useful in combination with \Cref{lem:polytopereformulation}
because from a cover of $X=\Delta^{n-1}$ by polytopes $Q_1,\ldots,Q_m$,
we can write $\Delta^{n-1}$ as a formal Minkowski difference
between two polytopes that are as complex as the $Q_S$'s. 
This is captured by the following lemma.

\begin{lemma}\label{lemma:subdivision}
The required number of hidden layers for $\MAX_n$ is at most the minimum $k$ so that there are polytopes $Q_1,\ldots,Q_m$ so that $\Delta^{n-1} = \bigcup_{i \in [m]} Q_i$
where $Q_S \in \cP_{k}$ for all non-empty $S \subset [m]$.
\end{lemma}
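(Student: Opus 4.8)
The plan is to combine the full additivity lemma (\Cref{lem:fullAd}) with the polytope reformulation of $\relu$ depth (\Cref{lem:polytopereformulation}). Since $\MAX_n = h_{\Delta^{n-1}}$, \Cref{lem:polytopereformulation} tells us that it suffices to exhibit polytopes $P,Q\in\cP_k$ with $\Delta^{n-1}+P=Q$. So the whole proof is about producing such a pair from the given cover.

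First I would record a small closure property: for every $k\geq 0$, the class $\cP_k$ is closed under finite Minkowski sums. For $k=0$ this holds because the Minkowski sum of single points is again a single point; for $k\geq1$ it is immediate from the definition, since a Minkowski sum of finitely many polytopes, each of the form $\sum_i P_i*Q_i$ with $P_i,Q_i\in\cP_{k-1}$, is again of that form. Next, given a cover $\Delta^{n-1}=\bigcup_{i\in[m]}Q_i$ with $Q_S:=\bigcap_{i\in S}Q_i\in\cP_k$ for every nonempty $S\subseteq[m]$, I apply \Cref{lem:fullAd} to obtain
\[
\Delta^{n-1}+\sum_{S\in T^0}Q_S \;=\; \sum_{S\in T^1}Q_S,
\]
where $T^0$ and $T^1$ are the collections of nonempty even-size and of odd-size subsets of $[m]$, respectively. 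Set $P:=\sum_{S\in T^0}Q_S$ and $Q:=\sum_{S\in T^1}Q_S$. Every $Q_S$ appearing here has a nonempty index set, hence lies in $\cP_k$, so by the closure property $P,Q\in\cP_k$; the only caveat is the degenerate case $m=1$, where $T^0=\emptyset$ and we read $P=\{0\}\in\cP_0\subseteq\cP_k$ (and $T^1$ is never empty, as it always contains the singletons). Finally, $\Delta^{n-1}+P=Q$ with $P,Q\in\cP_k$, so \Cref{lem:polytopereformulation} yields that $h_{\Delta^{n-1}}=\MAX_n$ requires at most $k$ hidden layers, which is exactly the claimed bound.

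I do not expect a genuine obstacle here: the real content is packed into \Cref{lem:fullAd}, which is assumed. The only points requiring a moment's care are the closure of $\cP_k$ under (iterated) Minkowski sums — needed so that the large sums $P$ and $Q$ remain in $\cP_k$ rather than merely being expressible using $\cP_k$ building blocks — and the bookkeeping of the trivial $m=1$ edge case. Everything else is a direct substitution into the two cited lemmas.
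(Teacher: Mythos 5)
Your proof is correct and follows exactly the route the paper intends: apply \Cref{lem:fullAd} to the cover to get $\Delta^{n-1}+\sum_{S\in T^0}Q_S=\sum_{S\in T^1}Q_S$ and then invoke \Cref{lem:polytopereformulation}, using closure of $\cP_k$ under Minkowski sums. The paper only sketches this in the paragraph preceding the lemma; your write-up adds the same argument with the minor but welcome details (the closure property and the $m=1$ edge case) made explicit.
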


Observe that the condition $Q_S \in \cP_{k}$ holds if $Q_1,\ldots,Q_m \in \cP_{k}$ 
and each $Q_S$ is a face of some $Q_i$ because $\cP_{k}$ is closed under taking faces. This observation suggests the following way of using \Cref{lemma:subdivision}.
Instead of writing $\Delta^{n-1}$ as a formal Minkowski difference, it suffices to subdivide $\Delta^{n-1}$ into $Q_1,\ldots,Q_m \in \cP_k$. If the subdivision is polyhedral, the intersections $Q_S$ are always faces, and such a subdivision automatically leads to writing $\Delta^{n-1}$ as $\Delta^{n-1}+P=Q$ for $P,Q \in \cP_k$. Note that $P$ and $Q$ themselves might be pretty complicated and  ``hard to find'' because they originate from the exponentially large alternating sums in \Cref{lem:fullAd}.

\subsection{Subdividing the four-simplex}

We now apply our previously developed machinery to explain the geometric intuition behind 
our construction of the two-hidden-layer $\relu$ neural network for $\MAX_5$.
Geometrically, we subdivide $\Delta^4$ to four pieces, each in $\cP_2$.
This subdivision leads to the two-hidden-layer $\relu$ network for $\MAX_5$ described in \Cref{eq-m}.

We start with the three-dimensional case $\Delta^3$.
Below we explain how to subdivide $\Delta^3$ into four pieces $Q_1,\ldots,Q_4$ so that each $Q_i$ is of the form
\[Q_i = p_i * Z_i\]
where $p_i \in \cP_0$ is a point and $Z_i \in \cP_{1}$ is a rhombus.
In words, each $Q_i$ is a rhombic pyramid.
This is a nice 3D puzzle, and an illustration of the solution is given in \Cref{fig:decomposition1} and \Cref{fig:decomposition2}.
This subdivision leads to a computation of $\MAX_4$ with two hidden layers, which is different than the ``trivial network''.

Let $x_1,\ldots,x_4$ denote the four points 
$$x_1=(-1,-1,-1), x_2=(1,1,-1), x_3=(-1,1,1), x_4=(1,-1,1).$$
The convex hull of the four points is a three-dimensional simplex (which is affinely equivalent to $\Delta^3$).
Denote by $x_{ij}$ the midpoints of $x_i$ and $x_j$.
The orthogonal projections of $x_1,\ldots,x_4$ to the plane $e_3^\perp$
are the four vertices of the square $[-1,1]^2$.
Cutting the simplex along the planes $e_1^\perp$ and $e_2^\perp$ partitions the simplex into four identical rhombic pyramids:
\begin{center}
\begin{tabular}{cc}
apex & base \\
\hline 
$x_{12}$ & $x_1 ,x_{14},x_{34}, x_{13}$ \\
$x_{12}$ & $x_2 , x_{23}, x_{34}, x_{24}$ \\
$x_{34}$ & $x_3 , x_{13}, x_{12}, x_{23}$ \\
$x_{34}$ & $x_4 , x_{14}, x_{12}, x_{24}$ \\
\end{tabular}
\end{center}
This is the representation we are after;
the apex is in $\cP_0$ and the base in $\cP_1$.

\begin{figure}\centering
\includegraphics[page=1]{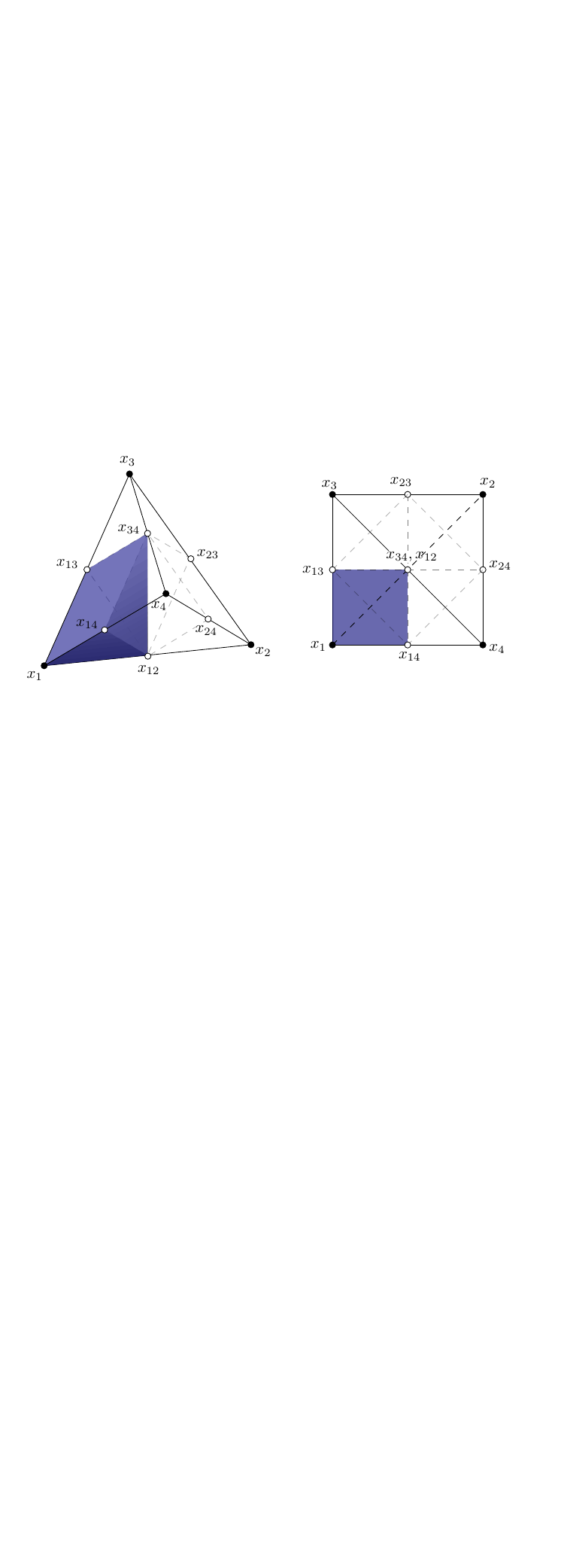}
\caption{Two ways to look at the three-simplex. The highlighted part is one of four pyramids of subdivision --- a rhombic pyramid with base $x_1,x_{14},x_{34},x_{13}$ and apex $x_{12}$. }
\label{fig:decomposition1}
\end{figure}

\begin{figure}\centering
\includegraphics[page=2]{Relu_of_Max-F.pdf}
\caption{Subdividing the three-simplex into four identical rhombic pyramids.}
\label{fig:decomposition2}
\end{figure}

We now explain how
the subdivision of $\Delta^3$ into rhombic pyramids leads to a subdivision of $\Delta^4$ into pieces in $\cP_2$. Let $X$ be a three-dimensional face of $\Delta^4$. So $X$ is a copy of $\Delta^3$. Subdivide $X$ into four rhombic pyramids of the form
$$p_i * Z_i.$$
Let $x_5$ be the vertex of $\Delta^4$ not in $X$. 
Because $\Delta^4 = x_5 * X$,
we see that the four pieces
$$x_5*(p_i * Z_i) = (x_5*p_i)*Z_i \in \cP_2$$
lead to the required subdivision of $\Delta^4$.

The four full-dimensional polytopes in $\cP_2$ in the subdivision are the Newton polytopes of the four terms $P_1(x)$, $P_2(x)$, $P_3(x)$, and $P_4(x)$ in \Cref{eq-m}, while the other terms in \Cref{eq-m} are the support functions of lower-dimensional faces of the four full-dimensional polytopes.

\subsection{Higher dimensions}

The geometric approach also allows to see why we can compute $\MAX_{3^n+2}$ in $n+1$ layers. We have seen that $\Delta^3$ can be subdivided into four pieces, each of which is a \emph{join}\footnote{A convex hull of two polytopes that are embedded in non-intersecting affine spaces.} 
of a point $p_i$ and a parallelogram $Z_i$. The simplex $\Delta^3$ is the join of a point and a triangle.
In other words, we replaced a triangle (which is in $\cP_2$) with a parallelogram (which is in $\cP_1$).
More abstractly, we replaced three points (the triangle) by a single parallelogram, which is the geometric intuition behind \Cref{claim-max5}.
This suggests that each layer can reduce ``the number of objects to be joined'' by a factor of three, which should then allow $\Delta^n$ to be computed in roughly $\log_3(n)$ layers.
A formal implementation of this proof strategy can be done, but it requires a non-trivial reasoning on the interactions between joins
and subdivisions. The algebraic proof presented in \Cref{sec:newupperbound}
``hides'' this complex interaction in the vector space structure of $\mathcal{T}_{a,b}$.

\section{Conclusions}
Summarizing, we provide the first improvement of the upper bound on the required depth to represent CPWL functions, after the initial representation result by Arora, Basu, Mianjy, and Mukherjee~\cite{arora2018understanding}. Thereby, we break the $\log_2(n)$-barrier that was conjectured to be optimal~\cite{hertrich2021towards} and proven to be tight in many special cases~\cite{haase2023lower,valerdi2024minimal,grillo2025depth}. Consequently, our result has the potential to shift the focus of future work from lower bounds to upper bounds, from impossibility results to constructions. One interpretation of our result is that neural networks can compute exact maxima more efficiently than in a pairwise manner. Besides the result itself, a key contribution of our paper is that new upper bounds can be proven via subdividing the simplex into polytopes with known depth complexity, which is captured by \Cref{lemma:subdivision}.

Curiously, while we were able to give a positive answer to the previously open question of whether the maximum of five numbers can be computed with two hidden layers, the same question remains open for the maximum of six numbers. The lower bound by Averkov, Hojny, and Merkert~\cite{averkov2025expressiveness} for neural networks with rational weights, however, tells us that further improving the upper bound would require rational weights with large denominators (or even irrational weights) in the neural network. Geometrically, when applying our subdivision approach, this would correspond to subdivisions of the simplex into even smaller cells. Overall, the question of whether constant depth, perhaps even two hidden layers, suffices to represent all CPWL functions remains an intriguing mystery.

\section*{Acknowledgments}

We thank Daniel Reichman for valuable discussions around $\relu$ networks. C.H.\ thanks Gennadiy Averkov for valuable discussions around \Cref{lem:fullAd} during the Cargese Workshop on Combinatorial Optimization 2024.
E.B., F.B., and A.Y.\ are supported by a DNRF Chair grant.
E.B., F.B., J.S., and A.Y.\ are part of BARC, Basic Algorithms Research Copenhagen, supported by the VILLUM Foundation grant 54451.

\bibliographystyle{alphaurl}
\bibliography{mr.bib}

\newcommand{\etalchar}[1]{$^{#1}$}
\begin{thebibliography}{HBDSS23}

\bibitem[ABMM18]{arora2018understanding}
Raman Arora, Amitabh Basu, Poorya Mianjy, and Anirbit Mukherjee.
\newblock Understanding deep neural networks with rectified linear units.
\newblock In {\em International Conference on Learning Representations}, 2018.

\bibitem[ADW{\etalchar{+}}25]{alman2025more}
Josh Alman, Ran Duan, Virginia~Vassilevska Williams, Yinzhan Xu, Zixuan Xu, and
  Renfei Zhou.
\newblock More asymmetry yields faster matrix multiplication.
\newblock In {\em Proceedings of the 2025 Annual ACM-SIAM Symposium on Discrete
  Algorithms (SODA)}, pages 2005--2039. SIAM, 2025.

\bibitem[AHM25]{averkov2025expressiveness}
Gennadiy Averkov, Christopher Hojny, and Maximilian Merkert.
\newblock On the expressiveness of rational {ReLU} neural networks with bounded
  depth.
\newblock In {\em The Thirteenth International Conference on Learning
  Representations}, 2025.

\bibitem[AXK17]{amos2017input}
Brandon Amos, Lei Xu, and Zico Kolter.
\newblock Input convex neural networks.
\newblock In {\em International conference on machine learning}, pages
  146--155. PMLR, 2017.

\bibitem[BBH{\etalchar{+}}25]{bakaev2025depth}
Egor Bakaev, Florestan Brunck, Christoph Hertrich, Daniel Reichman, and Amir
  Yehudayoff.
\newblock On the depth of monotone {ReLU} neural networks and {ICNNs}.
\newblock {\em arXiv preprint arXiv:2505.06169}, 2025.

\bibitem[BGH25]{brandenburg2025decomposition}
Marie-Charlotte Brandenburg, Moritz Grillo, and Christoph Hertrich.
\newblock Decomposition polyhedra of piecewise linear functions.
\newblock In {\em The Thirteenth International Conference on Learning
  Representations}, 2025.

\bibitem[BSA{\etalchar{+}}21]{bunning2021input}
Felix B{\"u}nning, Adrian Schalbetter, Ahmed Aboudonia, Mathias~Hudoba
  de~Badyn, Philipp Heer, and John Lygeros.
\newblock Input convex neural networks for building {MPC}.
\newblock In {\em Learning for dynamics and control}, pages 251--262. PMLR,
  2021.

\bibitem[BSG{\etalchar{+}}23]{bunne2023learning}
Charlotte Bunne, Stefan~G Stark, Gabriele Gut, Jacobo~Sarabia Del~Castillo,
  Mitch Levesque, Kjong-Van Lehmann, Lucas Pelkmans, Andreas Krause, and Gunnar
  R{\"a}tsch.
\newblock Learning single-cell perturbation responses using neural optimal
  transport.
\newblock {\em Nature methods}, 20(11):1759--1768, 2023.

\bibitem[CGR22]{chen2022improved}
Kuan-Lin Chen, Harinath Garudadri, and Bhaskar~D Rao.
\newblock Improved bounds on neural complexity for representing piecewise
  linear functions.
\newblock {\em Advances in Neural Information Processing Systems},
  35:7167--7180, 2022.

\bibitem[Cop82]{coppersmith1982rapid}
Don Coppersmith.
\newblock Rapid multiplication of rectangular matrices.
\newblock {\em SIAM Journal on Computing}, 11(3):467--471, 1982.

\bibitem[CSZ19]{chen2018optimal}
Yize Chen, Yuanyuan Shi, and Baosen Zhang.
\newblock Optimal control via neural networks: A convex approach.
\newblock In {\em International Conference on Learning Representations}, 2019.

\bibitem[CSZ20]{chen2020data}
Yize Chen, Yuanyuan Shi, and Baosen Zhang.
\newblock Data-driven optimal voltage regulation using input convex neural
  networks.
\newblock {\em Electric Power Systems Research}, 189:106741, 2020.

\bibitem[CU03]{cohn2003group}
Henry Cohn and Christopher Umans.
\newblock A group-theoretic approach to fast matrix multiplication.
\newblock In {\em 44th Annual IEEE Symposium on Foundations of Computer
  Science, 2003. Proceedings.}, pages 438--449. IEEE, 2003.

\bibitem[Cyb89]{cybenko1989approximation}
George Cybenko.
\newblock Approximation by superpositions of a sigmoidal function.
\newblock {\em Mathematics of control, signals and systems}, 2(4):303--314,
  1989.

\bibitem[EG24]{ergen24topology}
Ekin Ergen and Moritz Grillo.
\newblock Topological expressivity of {ReLU} neural networks.
\newblock In Shipra Agrawal and Aaron Roth, editors, {\em Proceedings of Thirty
  Seventh Conference on Learning Theory}, volume 247 of {\em Proceedings of
  Machine Learning Research}, pages 1599--1642. PMLR, 30 Jun--03 Jul 2024.
\newblock URL: \url{https://proceedings.mlr.press/v247/ergen24a.html}.

\bibitem[ES16]{eldan2016power}
Ronen Eldan and Ohad Shamir.
\newblock The power of depth for feedforward neural networks.
\newblock In {\em Conference on learning theory}, pages 907--940. PMLR, 2016.

\bibitem[GHL25]{grillo2025depth}
Moritz Grillo, Christoph Hertrich, and Georg Loho.
\newblock Depth-bounds for neural networks via the braid arrangement.
\newblock {\em arXiv:2502.09324}, 2025.

\bibitem[Gr{\"u}03]{gr2003unbaum}
Branko Gr{\"u}nbaum.
\newblock {\em Convex Polytopes}.
\newblock Springer-Verlag, New York, 2003.

\bibitem[GU18]{gall2018improved}
Francois~Le Gall and Florent Urrutia.
\newblock Improved rectangular matrix multiplication using powers of the
  coppersmith-winograd tensor.
\newblock In {\em Proceedings of the Twenty-Ninth Annual ACM-SIAM Symposium on
  Discrete Algorithms}, pages 1029--1046. SIAM, 2018.

\bibitem[HBDSS23]{hertrich2021towards}
Christoph Hertrich, Amitabh Basu, Marco Di~Summa, and Martin Skutella.
\newblock Towards lower bounds on the depth of relu neural networks.
\newblock {\em SIAM Journal on Discrete Mathematics}, 37(2):997--1029, 2023.

\bibitem[Her22]{hertrich2022facets}
Christoph Hertrich.
\newblock {\em Facets of neural network complexity}.
\newblock Technische Universitaet Berlin (Germany), 2022.

\bibitem[HHL23]{haase2023lower}
Christian Haase, Christoph Hertrich, and Georg Loho.
\newblock Lower bounds on the depth of integral {ReLU} neural networks via
  lattice polytopes.
\newblock In {\em The Eleventh International Conference on Learning
  Representations}, 2023.

\bibitem[HL24]{hertrich2024neural}
Christoph Hertrich and Georg Loho.
\newblock Neural networks and (virtual) extended formulations.
\newblock {\em arXiv:2411.03006}, 2024.

\bibitem[HLXZ20]{he2018relu}
Juncai He, Lin Li, Jinchao Xu, and Chunyue Zheng.
\newblock {ReLU} deep neural networks and linear finite elements.
\newblock {\em J. Comput. Math.}, 38(3):502--527, 2020.
\newblock \href {https://doi.org/10.4208/jcm.1901-m2018-0160}
  {\path{doi:10.4208/jcm.1901-m2018-0160}}.

\bibitem[HMST23]{huchette2023deep}
Joey Huchette, Gonzalo Mu{\~n}oz, Thiago Serra, and Calvin Tsay.
\newblock When deep learning meets polyhedral theory: A survey.
\newblock {\em arXiv preprint arXiv:2305.00241}, 2023.

\bibitem[HSW89]{hornik1989multilayer}
Kurt Hornik, Maxwell Stinchcombe, and Halbert White.
\newblock Multilayer feedforward networks are universal approximators.
\newblock {\em Neural networks}, 2(5):359--366, 1989.

\bibitem[LLPS93]{leshno1993multilayer}
Moshe Leshno, Vladimir~Ya Lin, Allan Pinkus, and Shimon Schocken.
\newblock Multilayer feedforward networks with a nonpolynomial activation
  function can approximate any function.
\newblock {\em Neural networks}, 6(6):861--867, 1993.

\bibitem[MB17]{mukherjee2017lower}
Anirbit Mukherjee and Amitabh Basu.
\newblock Lower bounds over boolean inputs for deep neural networks with {ReLU}
  gates.
\newblock {\em arXiv preprint arXiv:1711.03073}, 2017.

\bibitem[MCT21]{maragos2021tropical}
Petros Maragos, Vasileios Charisopoulos, and Emmanouil Theodosis.
\newblock Tropical geometry and machine learning.
\newblock {\em Proceedings of the IEEE}, 109(5):728--755, 2021.

\bibitem[MPCB14]{montufar14number}
Guido Mont\'{u}far, Razvan Pascanu, Kyunghyun Cho, and Yoshua Bengio.
\newblock On the number of linear regions of deep neural networks.
\newblock In {\em Proceedings of the 28th International Conference on Neural
  Information Processing Systems - Volume 2}, NIPS'14, page 2924–2932,
  Cambridge, MA, USA, 2014. MIT Press.

\bibitem[MS83]{mcmullen1983valuations}
Peter McMullen and Rolf Schneider.
\newblock Valuations on convex bodies.
\newblock In {\em Convexity and its applications}, pages 170--247. Springer,
  1983.

\bibitem[MS21]{maclagan2021introduction}
Diane Maclagan and Bernd Sturmfels.
\newblock {\em Introduction to tropical geometry}, volume 161.
\newblock American Mathematical Society, 2021.

\bibitem[Sch13]{schneider2013convex}
Rolf Schneider.
\newblock {\em Convex bodies: the {Brunn--Minkowski} theory}, volume 151.
\newblock Cambridge university press, 2013.

\bibitem[SES22]{safran2022depth}
Itay Safran, Ronen Eldan, and Ohad Shamir.
\newblock Depth separations in neural networks: What is actually being
  separated?
\newblock {\em Constructive approximation}, 55(1):225--257, 2022.

\bibitem[SRV24]{safran2024many}
Itay Safran, Daniel Reichman, and Paul Valiant.
\newblock How many neurons does it take to approximate the maximum?
\newblock In {\em Proceedings of the 2024 Annual ACM-SIAM Symposium on Discrete
  Algorithms (SODA)}, pages 3156--3183. SIAM, 2024.

\bibitem[Str69]{strassen1969gaussian}
Volker Strassen.
\newblock Gaussian elimination is not optimal.
\newblock {\em Numerische mathematik}, 13(4):354--356, 1969.

\bibitem[Tel16]{telgarsky2016benefits}
Matus Telgarsky.
\newblock Benefits of depth in neural networks.
\newblock In {\em Conference on learning theory}, pages 1517--1539. PMLR, 2016.

\bibitem[Val24]{valerdi2024minimal}
Juan Valerdi.
\newblock On minimal depth in neural networks.
\newblock {\em arXiv:2402.15315}, 2024.

\bibitem[Wil18]{williams2018limits}
Ryan Williams.
\newblock Limits on representing boolean functions by linear combinations of
  simple functions: Thresholds, {ReLUs}, and low-degree polynomials.
\newblock In {\em CCC}, 2018.

\bibitem[WS05]{wang2005generalization}
Shuning Wang and Xusheng Sun.
\newblock Generalization of hinging hyperplanes.
\newblock {\em IEEE Transactions on Information Theory}, 51(12):4425--4431,
  2005.

\bibitem[ZNL18]{zhang2018tropical}
Liwen Zhang, Gregory Naitzat, and Lek-Heng Lim.
\newblock Tropical geometry of deep neural networks.
\newblock In {\em International Conference on Machine Learning}, pages
  5824--5832. PMLR, 2018.

\end{thebibliography}

\end{document}